\documentclass[12pt]{article}

\usepackage{fullpage,parskip,mdframed}

\usepackage{titlesec}

\usepackage{tikz}
\usetikzlibrary{decorations.pathmorphing}

\usepackage{pgfplots}
\pgfplotsset{compat=1.17}
\usepgfplotslibrary{fillbetween}

\usepackage{caption}
\usepackage[colorlinks]{hyperref}
\hypersetup{
  colorlinks=true,
  linkcolor=blue,
  citecolor=blue,
  urlcolor=red,
}

\usepackage{amsthm,amsmath,amssymb,mathtools,mathrsfs}

\usepackage{bm,upgreek}

\usepackage{physics,complexity}

\usepackage[ruled,vlined,linesnumbered]{algorithm2e}
\renewcommand{\R}{\mathbb{R}}
\newcommand{\N}{\mathbb{N}}

\renewcommand{\C}{\mathbb{C}}

\newcommand{\bzero}{\mathbf{0}}
\newcommand{\bone}{\mathbf{1}}

\newcommand{\be}{\mathbf{e}}
\newcommand{\bx}{\mathbf{x}}
\newcommand{\by}{\mathbf{y}}

\newcommand{\bu}{\mathbf{u}}
\newcommand{\bv}{\mathbf{v}}
\newcommand{\bw}{\mathbf{w}}
\newcommand{\bp}{\mathbf{p}}

\newcommand{\bA}{\mathbf{A}}
\newcommand{\bB}{\mathbf{B}}

\newcommand{\bP}{\mathbf{P}}

\newcommand{\bT}{\mathbf{T}}

\newcommand{\btheta}{\bm{\uptheta}}
\newcommand{\bpsi}{\bm{\uppsi}}

\newcommand{\bPsi}{\bm{\Psi}}

\newcommand{\calA}{\mathcal{A}}
\newcommand{\calB}{\mathcal{B}}

\newcommand{\calD}{\mathcal{D}}

\newcommand{\calU}{\mathcal{U}}

\newcommand{\ii}{\mathrm{i}}

\newcommand{\precompute}{\mathsf{PRECOMPUTE}}
\newcommand{\predict}{\mathsf{PREDICT}}
\newcommand{\size}{\mathsf{size}}

\newcommand{\mom}{\mathsf{MOM}}

\newcommand{\median}{\mathrm{median}}
\renewcommand{\poly}{\mathrm{poly}}
\renewcommand{\polylog}{\mathrm{polylog}}
\newcommand{\sym}{\Pi_{\mathrm{sym}}}
\newcommand{\opnorm}[1]{\norm{#1}_{\mathrm{op}}}
\newcommand{\fnorm}[1]{\norm{#1}_{\mathrm{F}}}

\DeclarePairedDelimiterX{\inner}[2]{\langle}{\rangle}{#1, #2}
\let\E\relax
\DeclareMathOperator*{\E}{\mathbb{E}}
\DeclareMathOperator*{\Var}{\mathsf{Var}}
\newcommand{\dfn}{\coloneqq}

\newtheorem{theorem}{Theorem}
\newtheorem{lemma}{Lemma}

\newtheorem{proposition}[lemma]{Proposition}
\newtheorem*{remark}{Remark}
\newtheorem*{theorem*}{Theorem}
\newtheorem*{fact*}{Fact}

\newtheorem{definition}{Definition}

\mdfdefinestyle{proofline}{
  leftline=true,
  rightline=false,
  topline=false,
  bottomline=false,
  linecolor=black!25,
  linewidth=1.1pt,
  skipabove=\medskipamount,
  skipbelow=\medskipamount,
  innerleftmargin=8pt,
  innerrightmargin=0pt,
}

\surroundwithmdframed[style=proofline]{proof}

\DeclareCaptionType{scheme}[Scheme][List of Schemes]
\makeatletter
\providecommand*{\toclevel@scheme}{0}
\makeatother

\SetKwInput{KwIn}{input}
\SetKwInput{KwOut}{output}
\SetKwInput{KwData}{data}
\newcommand{\AlgoMetaSep}{\vspace{8pt}}

\title{How to sketch a learning algorithm}
\author{Sam Gunn \\
    UC Berkeley \\
    \texttt{gunn@berkeley.edu}}

\begin{document}

\maketitle

\begin{abstract}
    How does the choice of training data influence an AI model?
    This broad question is of central importance to interpretability, privacy, and basic science.
    At its technical core is the data deletion problem: after a reasonable amount of precomputation, quickly predict how the model would behave in a given situation if a given subset of training data had been excluded from the learning algorithm.

    We present a data deletion scheme capable of predicting model outputs with vanishing error $\varepsilon$ and failure probability $\delta$ in the deep learning setting.
    Our precomputation and prediction algorithms are only $\tilde{O}(\log(1/\delta)/\varepsilon^2)$ factors slower than regular training and inference, respectively.
    The storage requirements are those of $\tilde{O}(\log(1/\delta)/\varepsilon^2)$ models.

    Our proof is based on an assumption that we call \emph{stability}.
    In contrast to the assumptions made by prior work, stability appears to be fully compatible with learning powerful AI models.
    In support of this, we show that stability is satisfied in a minimal set of experiments with microgpt.
    Our code is available at \href{https://github.com/SamSpo1/microgpt-sketch}{microgpt-sketch}.

    At a technical level, our work is based on a new method for locally sketching an arithmetic circuit by computing higher-order derivatives in random complex directions.
    Forward-mode automatic differentiation allows cheap computation of these derivatives.
\end{abstract}

\newpage
{\small\tableofcontents}

% !TEX root = main.tex
\section{Introduction} \label{section:intro}

One of the most fundamental problems in AI is to understand model behavior as a function of the choice of training data.
Variations on this problem abound, going under names such as influence functions, data selection, datamodeling, and machine unlearning.
While the exact problem formulation depends on the application of interest, they are all addressed by \emph{data deletion}.\footnote{The term ``data deletion'' has multiple meanings in the literature. We use it because we feel it most accurately captures the problem we consider.}

A data deletion scheme consists of two algorithms:
\begin{itemize}
    \item \emph{precomputation}, which takes as input the description of a learning algorithm $\calA$ and a training dataset $T$, and outputs the trained model $\calA(T)$ together with some arbitrary auxiliary information; and
    \item \emph{prediction}, which takes as input the description of a function $\phi$ of the model parameters (we call $\phi$ a ``measurement'') together with a small subset $D$ of the training dataset, and uses the precomputed auxiliary information to quickly predict $\phi(\calA(T \setminus D))$: i.e., what the measurement outcome would have been \emph{if the subset $D$ had been excluded from the training dataset}.
\end{itemize}
To illustrate one of the many applications of such a protocol, consider the following scenario.
Suppose that the work of an artist is included in the training data for a new AI model.
This model later generates some content $x$ that appears related to the artist's work.
How can we determine whether the relationship is causal?

If we use a data deletion scheme, then we can quickly compute the probability that the model would have produced the given content if all of the artist's work were excluded from the set of training data.
All we need to do is search the training data for the artist's work $D$ and then query the prediction algorithm, using the measurement ``what is the probability that this model will produce $x$?'' and the deletion set $D$.

Although they would be extremely useful, data deletion schemes are notoriously hard to come by.
There currently \textbf{does not exist} a scheme capable of achieving arbitrary prediction error $\varepsilon$ for actual AI algorithms---in theory or in practice.
Instead, existing data deletion schemes universally rely on surrogate models or heuristics, and often only weakly correlate with the ground truth \cite{magic}.

In this work, we demonstrate that this situation may not be inevitable.
We introduce a new algorithmic technique which allows us to build a data deletion scheme that is directly applicable to realistic AI pipelines.
We prove that our scheme can achieve arbitrary prediction error $\varepsilon$ with arbitrary failure probability $\delta$, using only a $\tilde{O}(\log(1/\delta) / \varepsilon^2)$ overhead in each of training, inference, and storage.
Our proof is based on a new assumption about the learning algorithm which we call \emph{stability}.
We find that stability is practically reasonable using a small set of experiments with microgpt \cite{microgpt}.

Stability is an assumption about both the learning algorithm $\calA$ and the measurement $\phi$.
Roughly, it quantifies the sensitivity of $\phi(\calA(T))$ to small changes in the choice of data $T$.
The more robust the measurement is to changes in training data, the more stable it is.

Our data deletion scheme has the following properties:
\begin{itemize}
    \item precomputation is only a $\tilde{O}(\log(1/\delta)/\varepsilon^2)$ factor more expensive than training the model;
    \item predicting model behavior after deleting a set of data is only a $\tilde{O}(\log(1/\delta)/\varepsilon^2)$ factor more expensive than reading the deleted set of data and then evaluating the trained model once;
    \item the precomputed data can be stored in the space of $\tilde{O}(\log(1/\delta)/\varepsilon^2)$ trained models; and
    \item under the weakest form of our stability assumption, the prediction error is $O(\varepsilon)$ for any constant number of deletions $d$, while a stronger form of our stability assumption bounds the prediction error for arbitrary $d$.
\end{itemize}

Our techniques are based on a new method of locally sketching an arithmetic circuit, which may be of independent interest.

% !TEX root = main.tex
\subsection{Technical overview} \label{subsection:technical-overview}
In this section we explain our setting, assumption, and method in a more concise and intuitive format than the more technical sections which follow.
We include references to the relevant technical sections throughout.

\paragraph{The setting.}
Let us begin by describing the data deletion problem in more detail.
Our guiding principle is that we do not want our specification to exclude real-world AI pipelines, but we still want it to be completely rigorous.

For the purposes of this introduction, we restrict our attention to stochastic gradient descent on $n$ training examples with single-example batches.
This restriction is entirely pedagogical, and the technical sections use general arithmetic circuits.
See Section~\ref{subsection:arithmetic-circuits} for details on arithmetic circuits and how they relate to machine learning.

Following \cite{datamodels,magic}, we define the \emph{learning algorithm} $\calA$ as the following function mapping a vector of per-example \emph{downweights}\footnote{Whereas \cite{datamodels,magic} upweight examples, for our purposes it is cleaner to downweight them.} $\bw = (w_1, \dots, w_n) \in \R^n$ to trained model parameters $\btheta_n \in \R^p$.

\begin{samepage}
\begin{list}{}{\setlength{\leftmargin}{2em}}
    \item[] $\calA$ on input $\bw \in \R^n$:
    \begin{enumerate}
        \item Initialize $\btheta_0 \in \R^p$ using hard-coded randomness.
        \item For $i = 1, \dots, n$:
        \begin{enumerate}
            \item Let $\ell_i : \R^p \to \R$ be the loss on example $i$.
            \item Compute $\btheta_i \gets \btheta_{i-1} - (1 - w_i)\,\grad \ell_i(\btheta_{i-1})$.
        \end{enumerate}
        \item Output $\btheta_n$.
    \end{enumerate}
\end{list}
\end{samepage}
Observe that the downweights specify an extent to which data is ignored, so that if $\bone_D$ is the indicator vector for the set $D \subseteq [n]$ then $\calA(\bone_D)$ is the model trained on data indices $[n] \setminus D$.

The model can be probed using a \emph{measurement function} $\phi : \R^p \to \R$ mapping model parameters to an outcome.
For instance, $\phi$ can be the output of the model on some example (which should be hard-coded into $\phi$).
We assume that the measurement $\phi$ and loss functions $\ell_i$ are described by arithmetic circuits mapping $p$ inputs to 1 output.
By the Baur-Strassen theorem, this implies that $\calA$ and $\phi \circ \calA$ are also arithmetic circuits.

A data deletion scheme decomposes the evaluation of the circuit $f = \phi \circ \calA$ on input $\bone_D$ into two parts: slow precomputation depending only on $\calA$, and fast prediction depending on $\phi$ and $D$.
We say that a data deletion scheme has error $\varepsilon$ for a given $\phi$ and $D$ if the prediction is $\varepsilon$-close to $f(\bone_D)$.
Throughout this work, we require the prediction to be accurate even for a fixed choice of all the randomness used in training and inference (such as the initialization, batch ordering, and latent noise).
See the discussion on ``single-model'' data attribution in \cite[Section 2.1]{magic} for an explanation of why this is a more useful, and stringent, definition.

We will require that $f = \phi \circ \calA$ satisfies a condition we call ``stability,'' which we explain next.
See Section~\ref{subsection:stable-circuits} for a more thorough exposition on stability.

\paragraph{The stability assumption.}
Consider the Taylor expansion of $f : \R^n \to \R$ around $\bzero$,
\begin{equation} \label{equation:taylor}
    f(\bw) = \sum_{r \ge 0} \inner{\bT^{(r)}}{\bw^{\otimes r}},
\end{equation}
where $\inner{\cdot}{\cdot}$ denotes the inner product (after flattening tensors) and $\bT^{(r)} \in (\R^n)^{\otimes r}$ is the $r$th Taylor coefficient\footnote{The re-scaled coefficient $r! \, \bT^{(r)}$ consists of all $r$th-order mixed partial derivatives of $f$ at $\bzero$.} of $f$.
We say that $f$ is \emph{stable} if the terms in the Taylor expansion decay as
\begin{equation} \label{equation:stability}
    \fnorm{\bT^{(r)}} \le 2^{-\omega(r)},
\end{equation}
where $\fnorm{\cdot}$ denotes the Frobenius norm (i.e., the 2-norm of the flattened tensor) and $\omega(r)$ denotes growth that is super-linear in $r$.
For the purposes of this overview, we require Inequality~(\ref{equation:stability}) to hold for some fixed $\omega(r)$ function, even as the amount of training data $n$ and the size of the model $p$ grow.
In the technical sections we will use a weaker definition of stability which is not asymptotic at all: It only requires exponential decay in $r$, where the base of the exponent determines the number of deletions we can handle.

It turns out that \textbf{Inequality~(\ref{equation:stability}) is the only assumption we need}.
In Section~\ref{subsection:microgpt}, we find that Inequality~(\ref{equation:stability}) is satisfied in minimal experiments with microgpt \cite{microgpt}.

\begin{theorem*}[Special case of Theorem~\ref{theorem:deletion}]
    For any $\varepsilon, \delta > 0$, there exists a data deletion scheme such that the following holds.
    For any learning algorithm $\calA$ taking $n$ training examples to $p$ model parameters,
    \begin{itemize}
        \item precomputation requires $\tilde{O}(\size(\calA) \, \log(1/\delta) / \varepsilon^2)$ operations,
        \item the required storage is\footnote{Assuming pseudorandom generators exist, the dependence on $n$ can be dropped.} $\tilde{O}((n + p) \log(1/\delta) / \varepsilon^2)$, and
    \end{itemize}
    for any measurement $\phi$ and any set $D \subseteq [n]$ of $O(1)$ examples,
    \begin{itemize}
        \item predicting $(\phi \circ \calA)(\bone_D)$ requires $\tilde{O}(\size(\phi) \, \log(1/\delta) / \varepsilon^2)$ operations, and
        \item the prediction has error $O(\varepsilon)$ with probability $1 - \delta$ assuming $f = \phi \circ \calA$ is stable in the sense of Inequality~(\ref{equation:stability}),
    \end{itemize}
    where $\tilde{O}$ hides $\polylog(1/\varepsilon)$ factors.
\end{theorem*}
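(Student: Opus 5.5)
We plan to prove the theorem by sketching the Taylor expansion (\ref{equation:taylor}) with random complex directions. Forward-mode automatic differentiation lets us compute cheaply the higher-order directional derivatives of $f$ at $\bzero$ along such directions. The plan has four steps: an unbiased estimator for each Taylor term, a variance bound, a cheap evaluation of the estimator, and control of the truncation error.

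\textbf{Step 1 (an unbiased estimator for each term).} Fix a truncation order $R = O(\log(1/\varepsilon)/\log\log(1/\varepsilon))$. This choice makes the tail $\sum_{r>R}\fnorm{\bT^{(r)}}\,\|\bone_D\|_2^r \le \sum_{r > R} 2^{-\omega(r)} d^{r/2}$ at most $\varepsilon$ when $d = O(1)$, and this is where stability is used. During precomputation we draw $m = O(\log(1/\delta)/\varepsilon^2)$ independent random vectors $\bu_1,\dots,\bu_m \in \C^n$ with i.i.d.\ uniform unit-modulus (or standard complex Gaussian) entries. For each $j$, forward-mode AD through $\calA$ gives all the pushforwards
\[
  \frac{1}{r!}\,\frac{d^r}{dt^r}\,\calA(t\bu_j)\Big|_{t=0}, \qquad r \le R .
\]
These are truncated Taylor jets in $\R^p\otimes\C$, computed by propagating jets of length $R$ through the circuit. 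This costs $\tilde O(\size(\calA))$ operations per direction, since jet arithmetic costs $\polylog(1/\varepsilon)$ per gate. We store the jets, which is the $\tilde O(p\,m)$ part of the storage, together with the $\bu_j$, which is the $n\,m$ part.

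For prediction, we compose the stored jets with $\phi$, again in forward mode and at cost $\tilde O(\size(\phi))$ per $j$. This yields the scalars $\inner{\bT^{(r)}}{\bu_j^{\otimes r}}$. We then output a median-of-means over $j$ of
\[
  \sum_{r\le R}\inner{\bT^{(r)}}{\bu_j^{\otimes r}}\;\overline{\inner{\bu_j}{\bone_D}}^{\,r}\cdot c_r ,
\]
with normalization $c_r$ chosen so that $\E\bigl[\bu^{\otimes r}\,\overline{(\bu^\top \bone_D)}^{\,r}\bigr]$ equals $\sym(\bone_D^{\otimes r})$ up to the known factor $r!$. For complex Gaussians this follows from Wick's theorem, since only the pairings of $\bu$ with $\bar\bu$ survive. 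Because $\bT^{(r)}$ is symmetric, each summand is then unbiased for $\inner{\bT^{(r)}}{\bone_D^{\otimes r}}$. Computing $\inner{\bu_j}{\bone_D}$ requires reading $D$.

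\textbf{Step 2 (variance bound, the main obstacle).} We must bound the second moment
\[
  \E\bigl|\inner{\bT^{(r)}}{\bu^{\otimes r}}\,(\bu^\top\bone_D)^r\bigr|^2 \le C^r\, r!^{\,O(1)}\, \fnorm{\bT^{(r)}}^2\,\|\bone_D\|^{2r}.
\]
We would prove this by hypercontractivity, or by a direct Wick-pairing count for complex Gaussians, applied to the degree-$2r$ polynomial. The $r!^{O(1)}$ and $C^r$ factors, together with $d^r$, are absorbed by the super-exponential decay $2^{-\omega(r)}$, so the total variance is $O(1)$ for constant $d$. If the Gaussian constants turn out too lossy, the fallback is to use roots of unity of order greater than $R$, which makes the unbiasedness exact through character orthogonality and keeps moments bounded.

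\textbf{Step 3 (concentration and error).} With $m$ samples, median-of-means gives error $O(\varepsilon)$ with probability $1-\delta$. Adding the truncation error from Step 1 and the floating-point jet error, which is negligible, completes the proof. The $\tilde O$ factors come from $R = \polylog(1/\varepsilon)$ in the jet arithmetic.
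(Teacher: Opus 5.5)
Your architecture is the paper's: jets of $\calA(z\bu_j)$ over $\C[z]/(z^{R+1})$, composition with $\phi$ at prediction time, the multiplier $\inner{\bu_j}{\bone_D}^r$, and median-of-means. With standard complex Gaussian directions the scheme is sound and equivalent to the paper's Haar vectors, since $\bu=\norm{\bu}\,\bpsi$ with the radius independent of $\bpsi$. Wick's theorem gives $\E[(\bu\bu^\dagger)^{\otimes r}]=r!\,\sym^{(n,r)}$, hence $c_r=1/r!$. Applied at order $2r$, it bounds the normalized variance by $\binom{2r}{r}\fnorm{\bT^{(r)}}^2d^r\le 4^r\fnorm{\bT^{(r)}}^2d^r$, which is exactly Lemma~\ref{lemma:symmetric-subspace-estimator}. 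Your single median-of-means over the summed estimator is also valid: the standard deviations add to at most $\sum_r(2\sqrt d)^r\fnorm{\bT^{(r)}}$, and this avoids the paper's union bound over the $s$ orders. However, three steps fail as written.

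\textbf{First}, your primary choice of i.i.d.\ unit-modulus entries, and the roots-of-unity fallback, give a biased estimator. For those distributions $\E[u_{i_1}\cdots u_{i_r}\bar u_{j_1}\cdots\bar u_{j_r}]$ is the indicator that the two index multisets agree. That matrix is not proportional to $\sym^{(n,r)}$, so no scalar $c_r$ works. For $r=2$, $D=\{1,2\}$, the mean is proportional to $\bar T_{11}+\bar T_{22}+4\bar T_{12}$ rather than $\bar T_{11}+\bar T_{22}+2\bar T_{12}$. What you need is unitary invariance of the direction distribution, in the same spirit as the paper's remark on real directions. Relatedly, under the convention $\inner{\bu}{\bv}=\bu^\dagger\bv$, your displayed factor $\overline{\inner{\bu_j}{\bone_D}}^{\,r}=(\bu_j^\top\bone_D)^r$ makes each summand a polynomial in $\bu_j$ alone. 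By circular symmetry it is then mean-zero for $r\ge 1$. The factor must be $\inner{\bu_j}{\bone_D}^r$, as your normalization sentence indicates.

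\textbf{Second}, $R=O(\log(1/\varepsilon)/\log\log(1/\varepsilon))$ is too small for a general super-linear $\omega$. For $\omega(r)=r\log\log r$, your tail bound at that $R$ is no better than $\varepsilon^{o(1)}$. Take $R=\Theta(\log(1/\varepsilon))$, as the paper does with $s=\log_4(2/\varepsilon)$. This bounds the tail by $4^{-s}\sup_r(4\sqrt d)^r2^{-\omega(r)}=O(\varepsilon)$, and since $R$ is still $\polylog(1/\varepsilon)$ the complexity is unchanged.

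\textbf{Third}, an $r!^{O(1)}$ factor in the normalized variance would \emph{not} be absorbed by $2^{-\omega(r)}$ in general, because $r!=2^{\Theta(r\log r)}$. The argument works only because the second moment is at most $(2r)!\,\fnorm{\bT^{(r)}}^2d^r$. The $r!$ factors then cancel exactly against $c_r^2=1/(r!)^2$, leaving the purely exponential $4^r$. You need to establish this precise bound rather than leave it as $C^r\,r!^{O(1)}$.
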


In the body of the paper, we use a more general and precise definition of stability that implies a bound on the approximation error under any number of deletions.
This definition is given in Section~\ref{subsection:stable-circuits}, where we also explain how stability can be viewed as a combination of an intuitive continuity property and a bound on the ``stable rank'' of the derivatives.

The remainder of this section is an overview of our data deletion scheme.

\paragraph{The approach.}
For now, let us suppose that we know $\phi$ during precomputation.
Our goal is then to compute a local sketch of $f$ that enables fast approximation of $f(\bone_D)$ for any small set $D$, assuming that $f$ is stable.
At the end of this section we will explain how this extends to our data deletion scheme (where the function $f$ is not fully known during precomputation).
Our data deletion is presented formally in Section~\ref{section:data-deletion}.

We begin by observing that we can truncate Equation~(\ref{equation:taylor}) to finite order without incurring too much error.
For any constant-sized set $D$, Inequality~(\ref{equation:stability}) implies that the $s$th order Taylor approximation to $f(\bone_D)$ has error $O(\varepsilon)$ for $s = \log(1/\varepsilon)$.

A natural approach to data deletion is then to have precomputation compute and store $\bT^{(r)}$ for $r \le s$.
Then given $\bT^{(r)}$ and $D$, prediction can compute $\inner{\bT^{(r)}}{\bone_D^{\otimes r}}$ in time $O(\abs{D}^r)$, so it can compute the $s$th order Taylor approximation to $f(\bone_D)$ in time $O(\abs{D}^s)$.
This is faster than re-training for small enough $D$ and $s$.
However, the precomputed data itself has size $n^s$.
Since $n$ can be in the trillions for large AI models, this is prohibitively expensive even for $s = 2$.

\paragraph{Sketching by random projections.}
The storage issue is easy to solve using random projections: Instead of having precomputation store the entire tensor $\bT^{(r)}$, it can store $\inner{\bT^{(r)}}{\bPsi_i^{(r)}}$ for many random unit vectors $\bPsi_1^{(r)}, \dots, \bPsi_k^{(r)} \in (\R^n)^{\otimes r}$, where the coordinates of each $\bPsi_i^{(r)}$ are selected with a pseudorandom function.
Since
\[
    n^r \, \E_{\bPsi}\left[\inner{\bT^{(r)}}{\bPsi} \cdot \inner{\bPsi}{\bone_D^{\otimes r}}\right] = \inner{\bT^{(r)}}{\bone_D^{\otimes r}},
\]
our prediction algorithm can use the stored values $\inner{\bT^{(r)}}{\bPsi_i^{(r)}}$ to compute an unbiased estimate for $\inner{\bT^{(r)}}{\bone_D^{\otimes r}}$.
Furthermore, the Johnson-Lindenstrauss lemma \cite{JL} implies a bound on the variance of roughly $\frac{\abs{D}^r}{k} \, \fnorm{\bT^{(r)}}^2$.
Assuming Inequality~(\ref{equation:stability}), this is only $o(1/k)$ for any constant-sized $D$.

Of course, the main issue persists: Precomputation still needs to produce the entire tensor $\bT^{(r)}$ in order to compute $\inner{\bT^{(r)}}{\bPsi_i^{(r)}}$.
What we need is to compute a random projection \emph{implicitly}, without ever materializing $\bT^{(r)}$.

\paragraph{Sketching by differentiation.}
It turns out that if $f$ is an arithmetic circuit, there is a very simple algorithm for implicitly computing random projections of the Taylor coefficients $\bT^{(r)}$ for all $r \le s$:
\begin{center}
\begin{minipage}{0.75\textwidth}
\centering
    \it
    Sample a random unit vector $\bpsi \in \C^n$ and evaluate $f$ on input $z \, \bpsi$ over the polynomial ring $R \dfn \C[z] / (z^{s+1})$.
\end{minipage}
\end{center}
If the circuit for $f$ includes any non-polynomial unary gate, it may simply be replaced by the Taylor expansion around the constant term of the input polynomial.
Whereas it takes time $O(\size(f))$ to evaluate $f$ over $\R$ or $\C$, it takes $O(\size(f) \, s \log s)$ to evaluate $f$ over the polynomial ring $R$; see Section~\ref{subsection:forward-ad} for details.

Substituting $z \, \bpsi$ for $\bw$ in Equation~(\ref{equation:taylor}),
\[
    f(z \, \bpsi) = \sum_{r=0}^s \inner{\bT^{(r)}}{\bpsi^{\otimes r}} \, z^r
\]
in the polynomial ring $R$.
Therefore we obtain random projections of the $\bT^{(r)}$ by simply computing $f(z \, \bpsi)$ and reading off the polynomial coefficients!
The coefficient on $z^r$ in $f(z \, \bpsi)$ is precisely $\inner{\bT^{(r)}}{\bpsi^{\otimes r}}$.

Of course, this algorithm does \emph{not} give us projections in uniformly random directions $\bPsi \in (\R^n)^{\otimes r}$.
Instead we have projections in directions of the form $\bpsi^{\otimes r}$ for random unit vectors $\bpsi \in \C^n$.
These turn out to be good enough.

The key fact is that for any symmetric tensors $\bA, \bB \in (\R^n)^{\otimes r}$, if $\bpsi \in \C^n$ is a random complex unit vector, then
\begin{equation} \label{equation:symmetric-subspace}
    \binom{n+r-1}{r} \, \E_{\bpsi}\left[\inner{\bA}{\bpsi^{\otimes r}} \, \inner{\bpsi^{\otimes r}}{\bB}\right] = \inner{\bA}{\bB}.
\end{equation}
We prove Equation~(\ref{equation:symmetric-subspace}) in Section~\ref{subsection:symmetric-subspace}, where we also show that the variance is at most $4^r \, \fnorm{\bA}^2 \, \fnorm{\bB}^2$.

Substituting $\bA \dfn \bT^{(r)}$ and $\bB \dfn \bone_D^{\otimes r}$, this allows our prediction algorithm to estimate $\inner{\bT^{(r)}}{\bone_D^{\otimes r}}$ for a given $D$ by making use of the precomputed values $\inner{\bT^{(r)}}{\bpsi_i^{\otimes r}}$.
That is, our prediction algorithm can use
\[
    \frac{1}{k} \, \sum_{i=1}^k \binom{n+r-1}{r} \, \inner{\bT^{(r)}}{\bpsi_i^{\otimes r}} \, \inner{\bpsi_i}{\bone_D}^r
\]
as an unbiased estimate for $\inner{\bT^{(r)}}{\bone_D^{\otimes r}}$.
The variance is bounded by $\frac{(4 \abs{D})^r}{k} \, \fnorm{\bT^{(r)}}^2$, which is $o(1/k)$ assuming Inequality~(\ref{equation:stability}).
By setting $k \approx 1/\varepsilon^2$, we will get error of roughly $O(\varepsilon)$.

Finally, our prediction algorithm estimates $f(\bone_D)$ as
\begin{align*}
    f(\bone_D) &\approx \sum_{r=0}^s \inner{\bT^{(r)}}{\bone_D^{\otimes r}} \\
    &\approx \sum_{r=0}^s \frac{1}{k} \sum_{i=1}^k \binom{n+r-1}{r} \, \inner{\bT^{(r)}}{\bpsi_i^{\otimes r}} \, \inner{\bpsi_i}{\bone_D}^r.
\end{align*}
As noted earlier, Inequality~(\ref{equation:stability}) implies that the first approximation incurs an error of $O(\varepsilon)$ when $s = \log(1/\varepsilon)$.
And as just discussed, another use of Inequality~(\ref{equation:stability}) implies that the second approximation incurs an additional error of $O(\varepsilon)$ for $k \approx 1/\varepsilon^2$.
See Theorem~\ref{theorem:deletion} for a full derivation and the technical result.

\paragraph{Upgrading to data deletion.}
So far we have assumed that precomputation knows $\phi$, which is not allowed in data deletion.
In our actual scheme, precomputation works identically except that $f$ is replaced by $\calA$.
Even though $\calA$ has many outputs, this does not increase the computational complexity: Precomputation can still be done in time $O(\size(\calA) \, s \log s)$ by simply evaluating each $\calA(z \, \bpsi_i)$ over the polynomial ring $R \dfn \C[z] / (z^{s+1})$.

Prediction, on the other hand, becomes slightly more complicated.
While it would be natural to simply compute the analogous approximation to $\calA(\bone_D)$ as was done for $f(\bone_D)$, and to then apply $\phi$ to this approximation, this is not guaranteed to be accurate by stability alone.
For instance, imagine that $\calA$ performs some arbitrarily unstable computation, which is then undone by $\phi$ so that $f = \phi \circ \calA$ is a constant function.
Then $f$ is highly stable, but any approximation error on $\calA(\bone_D)$ translates to large error on the output of $\phi$.

Instead, our prediction algorithm will first compute $\phi$ on the ring elements output by our sketch of $\calA$, thereby obtaining \emph{the exact same sketch} that we would have obtained if we had applied precomputation directly to $f$.
We can then evaluate this sketch in the same way as before.

\paragraph{Minor differences from the main body.}
We have glossed over several subtleties for the purposes of this introduction, creating differences from the main body of the paper.
To minimize confusion, we briefly explain the main differences now.

First, the body considers the learning and measurement algorithms as arbitrary arithmetic circuits; our restriction to stochastic gradient descent in the introduction is not important.

Second, we find it convenient to work over the complex numbers $\C$ rather than $\R$.
Since our model for AI computations is simply arithmetic circuits, this does not cost us any generality.
In any case, as explained in Section~\ref{subsection:arithmetic-circuits}, this choice is not practically important and does not hinder our implementation at \href{https://github.com/SamSpo1/microgpt-sketch}{microgpt-sketch}.

Third, the definition of stability presented in Inequality~(\ref{equation:stability}) is less general than the one we actually use in the body.
The real definition, presented in Section~\ref{subsection:stable-circuits}, is more conducive to proving quantitative bounds on the error of our schemes.

Fourth, we are unable to show that an average of estimates from Equation~(\ref{equation:symmetric-subspace}) concentrates well.
We would therefore obtain a poor bound on the failure probability if we used the scheme from this introduction.
Instead, we use a median-of-means estimator in our prediction algorithm (Algorithm~\ref{algorithm:prediction}).

% !TEX root = main.tex
\subsection{Related work} \label{subsection:related-work}

Variations on the data deletion problem abound, and we only survey some of the most relevant work here.
Our work is most similar to the line of work on \emph{datamodeling} or \emph{predictive data attribution}, although there are several key differences which we outline below.
For an introduction to the area, we recommend the tutorial of \cite{icml-tutorial}.

A \emph{surrogate model} is an idealized model of a system which is used in place of the actual system.
Every prior work with provable guarantees for data deletion in the deep learning setting relies on a surrogate model.
This precludes a vanishing prediction error, as any method that relies on a surrogate model necessarily incurs the fixed approximation error of the surrogate model itself.
In contrast, our stability assumption is not a surrogate model: It appears to be fully compatible with learning powerful models on large datasets.

\paragraph{The additivity assumption.}
Perhaps the most successful surrogate model for answering questions of data influence has been the \emph{additivity} assumption.
This assumption says that the function $f$ mapping the list of data weights to a model prediction (on a fixed test query) is linear.
In other words, it is assumed that the influence of a set of training data on any model prediction is the sum of the influences of the individual training examples.
For instance, the work on \emph{datamodeling} \cite{datamodels,trak,magic} and \emph{influence functions} \cite{influence-functions-1,influence-functions-2,influence-functions-3} uses the additivity assumption.

In reality, it is of course understood that additivity does not hold for real models.
The hope is rather that the additivity assumption does not incur too large an error, at least for small perturbations in the data weights.
There are theoretical reasons to believe that the additivity assumption should incur bounded error for random perturbations \cite{datamodels,fourier-influence-functions}, but in practice it is usually necessary to significantly alter learning algorithms in order to keep this error small \cite{magic}.
Moreover, random perturbations may not capture many of the most important use cases such as removing many closely-related examples.

Designing data deletion schemes with the additivity assumption has several major drawbacks:
\begin{itemize}
    \item the prediction error becomes unreasonable past a certain number of deletions;
    \item there is no way to reduce the prediction error by expending more computation;
    \item even if additivity is a good approximation under random deletions, it can be a poor approximation for worst-case deletions.
\end{itemize}
We therefore view our departure from the additivity assumption as an essential aspect of this work.

However, we emphasize that \emph{even assuming additivity}, our scheme greatly improves on prior work.
The strongest result in the additivity model is the work of \cite{magic}, which exactly computes counterfactuals assuming additivity.
Accepting additivity, the main drawback of that work is that it is necessary to know the measurement function ahead of time during precomputation.
Our work yields the first scheme in the deep learning setting to both (a) not require foreknowledge of the measurement function during precomputation and (b) guarantee a provable bound on the counterfactual prediction error.

\paragraph{Post hoc certified unlearning.}
It is common for machine unlearning schemes to make no reference to the algorithm used to train the model.
For instance, one method is based on performing gradient ascent on the loss for the set of data to be deleted \cite{neurips-comp}.
These methods can be applied to ``delete'' even data that was never seen during training of the model.
We discuss this approach in more detail in Appendix~\ref{section:post-hoc-unlearning}.

\paragraph{Fourier analysis.}
As mentioned in the paragraph on the additivity assumption, \cite{fourier-influence-functions} argue that the additivity assumption should be a good approximation for the influence of random subsets of training data.
They adopt the ``discrete influence'' perspective, where one views the datamodeling function as being restricted to the hypercube, $f : \{0,1\}^n \to \R$, allowing them to use ideas from the analysis of Boolean functions.
It would be interesting to see whether similar arguments can be used to shed light on our stability assumption.
It would also be interesting to see whether our techniques can be extended to the discrete influence setting.
This could potentially enable data deletion using a stability assumption related to the Fourier spectrum of the discrete influence function.

\paragraph{Other work.}
Since we are not focused on any particular application of data deletion, we do not discuss considerations such as privacy concerns when applying data deletion as a solution to machine unlearning.
We refer the interested reader to \cite{sanjam,aloni}.
We caution that data deletion by itself is not a complete solution to machine unlearning, and na\"{\i}ve applications of data deletion can in some cases actually \emph{harm} privacy \cite{mul-jeopardizes-privacy,privacy-onion,unlearning-harm}.

There is also work on provable data deletion for particular algorithms such as $k$-means clustering \cite{valiant}, linear classifiers \cite{goldstein}, and linear or logistic regression \cite{regression-unlearning}.

% !TEX root = main.tex
\section{Preliminaries} \label{section:preliminaries}
This section has two parts: Section~\ref{subsection:notation}, which contains the notation and basic definitions used throughout the paper; and Section~\ref{subsection:arithmetic-circuits}, which explains the mathematical model we use to describe AI computations.

\subsection{Notation} \label{subsection:notation}
Let $\N$ denote the set of positive integers, $\R$ denote the real numbers, and $\C$ denote the complex numbers.
For $t \in \N$, let $[t] \dfn \{1, \dots, t\}$; for $u, v \in \R$, let $[u, v] \dfn \{w \in \R \mid u \le w \le v\}$.
We write $\sum_{r \ge i}$ to denote the sum over all integers $r \ge i$.
We use $n[r]$ to denote the dimension of the symmetric subspace of $(\C^n)^{\otimes r}$, i.e., $n[r] \dfn \binom{n+r-1}{r}$.
See Section~\ref{subsection:symmetric-subspace} for more background on the symmetric subspace.

\paragraph{Tensors and linear algebra.}
We use bold lowercase letters to denote vectors and bold uppercase letters to denote matrices, tensors, or other collections of vectors.
If $\bu \in \C^n$ is a vector, we write $u_i$ to denote the $i$th coordinate of $\bu$; we write $\bu_i$ to denote a vector in a collection of vectors $\bu_1, \dots, \bu_r$.

Let $\bpsi^T$ denote the transpose and $\bpsi^\dagger$ denote the conjugate transpose of $\bpsi \in \C^n$, and let $\inner{\bu}{\bv} \dfn \bu^\dagger \bv$ denote the inner product of vectors $\bu, \bv \in \C^n$.
If $\bu$ or $\bv$ is a tensor, it is flattened and interpreted as a vector for the purposes of computing inner products.
For a complex scalar, vector, matrix, or general tensor $\bT$, let $\overline{\bT}$ denote the complex conjugate.

For a vector $\bu \in \C^n$, let $\norm{\bu}$ denote the 2-norm.
For a tensor $\bT \in (\C^n)^{\otimes r}$, let $\fnorm{\bT} = \sqrt{\inner{\bT}{\bT}}$ denote the Frobenius norm and let $\opnorm{\bT}$ denote the operator norm defined as
\[
    \opnorm{\bT} \dfn \sup_{\substack{\bu_1, \dots, \bu_r \in \C^n \\ \norm{\bu_1} = \dots = \norm{\bu_r} = 1}} \abs{\inner{\bT}{\bu_1 \otimes \dots \otimes \bu_r}}.
\]
A \emph{random unit vector} in $\C^n$ (resp. $\R^n$) refers to a sample from the Haar distribution---i.e., the unique distribution which is invariant under the action of every unitary (resp. orthogonal) matrix---on unit vectors in $\C^n$ (resp. $\R^n$).

\paragraph{Calculus.}
For a function $f : \C \to \C^p$, let $\partial^r f(x) \in \C^p$ denote the vector of $r$th-order derivatives of $f$ evaluated at $x$.
For a function $f : \C^n \to \C^p$, let $\partial_i^r f(\bx) \in \C^p$ be the vector of $r$th order partial derivatives of $f$ with respect to the $i$th input, evaluated at $\bx$.
If $r$ is not specified then it is interpreted as $r = 1$.

\paragraph{Computation.}
We count real operations as constant time and we assume that real numbers can be stored exactly in constant space.
We further assume random access memory (RAM) with a word size large enough to look up any value in memory, so that arbitrary lookups take constant time.

\subsection{Arithmetic circuits and machine learning} \label{subsection:arithmetic-circuits}
We make the pedagogical decision to model machine learning computations---both training and inference---as arithmetic circuits over the complex numbers.
All of our results apply essentially identically to real computations including general gates such as ReLU, GELU, or Softmax; the technical details would just be more cumbersome.
While these gates may not be defined on complex inputs, our method only requires that they have local polynomial approximations around real inputs.
Indeed, we easily implemented the general case for microgpt (see Section~\ref{subsection:microgpt}).

Formally, an \emph{arithmetic circuit} with $n$ inputs and $p$ outputs is defined by a directed acyclic graph with labeled nodes.
Each node in the graph with in-degree zero is labeled by either an input index $i \in [n]$ or a complex number.
Each node with positive in-degree is labeled by either $+$ or $\times$.
The output of a gate is passed as input to all of its children.
We do not restrict the in-degree or out-degree.
Each node with out-degree zero is further labeled by an output index $i \in [p]$.
Whereas input indices may be re-used, each output index is a label for exactly one node.
For an input $\bx$ consisting of $n$ values $(x_1, \dots, x_n)$, the output $f(\bx)$ is computed by replacing each input-index node label $i \in [n]$ with the corresponding input value $x_i$, following the computation specified by the graph, and outputting the vector $\by$ specified by the output values at the output indices.
Note that the inputs $x_i$ can be elements of any $\C$-algebra, such as $\R$, $\C$, or $\C[z] / (z^5)$.

We define the size of $f$, denoted $\size(f)$, as the maximum of the number of inputs, the number of outputs, and the number of edges in the graph.
Since we count real operations as constant time, evaluating $f$ requires $O(\size(f))$ operations.

In the context of machine learning, note that if model evaluations can be described by arithmetic circuits acting on model parameters, then the Baur-Strassen theorem \cite[Theorem 9.10]{arithmetic-complexity} implies that gradient steps can also be represented by arithmetic circuits.
Therefore it is without further loss of generality to assume that training, in addition to inference, is an arithmetic circuit.

% !TEX root = main.tex
\section{The toolkit}
\label{section:toolkit}

This section explains the basic facts and tools that we will use to build our sketching and data deletion schemes.
In Section~\ref{subsection:forward-ad} we briefly recall the computational complexity of various kinds of differentiation.
In Section~\ref{subsection:symmetric-subspace} we explain some basic facts about the symmetric subspace which underlie our sketching and data deletion schemes.
In Section~\ref{subsection:stable-circuits} we give our definition of stability for arithmetic circuits, which we will later use to analyze our data deletion scheme.

% !TEX root = main.tex
\subsection{Forward-mode automatic differentiation} \label{subsection:forward-ad}
Reverse-mode automatic differentiation (i.e., backpropagation) can compute the gradient of an arithmetic circuit $f$ with \textbf{$n$ inputs and 1 output} in $O(\size(f))$ operations.
It does not extend to higher-order derivatives: There is no known $\size(f) \cdot \poly(s)$-time algorithm for computing the set of univariate higher-order derivatives $\{\partial_i^s f(\bx)\}_{i \in [n]}$ for $\bx \in \C^n$.
Reverse-mode automatic differentiation also suffers from a memory issue, requiring the storage of $\size(f)$ values from the forward pass.
While the memory burden can sometimes be reduced using a technique called rematerialization \cite{rematerialization,evaluating-derivatives,replay,magic}, this comes at the cost of increased computation.

In contrast, forward-mode automatic differentiation enables the computation of \emph{all derivatives} up to order $s$ of an arithmetic circuit $f$ with \textbf{1 input and $p$ outputs} in $O(\size(f) \, s \log s)$ operations, with only an $O(s \log s)$ overhead in memory.
Forward-mode automatic differentiation is extremely simple.
Instead of computing $f$ over $\C$ on an input value $x \in \C$, we compute $f$ over the ring $R \dfn \C[z]/(z^{s+1})$ on an input polynomial $x + z \in R$.
If the output polynomials are $a_1(z), \dots, a_p(z)$, then the coefficient on $z^r$ in $a_j(z)$ is exactly $(1/r!) \, \partial^r f_j(x)$, where $f_j(x)$ is the $j$th output coordinate of $f(x)$.
To compute a directional derivative---say, the derivative of $f$ at $\bx$ in direction $\bpsi$---we simply compute $f(\bx + z \bpsi)$ over $R$.
Addition gates take $O(s)$ operations per input and multiplication gates take $O(s \log s)$ per input using the fast Fourier transform, so the total complexity is $O(\size(f) \, s \log s)$.

Because forward-mode automatic differentiation can be interpreted simply as evaluation over the ring $R \dfn \C[z]/(z^{s+1})$, we will not use calculus terminology in the remainder of this work.
The only fact we use from this section is that we can evaluate $f$ over $R$ in time $O(\size(f) \, s \log s)$.

% !TEX root = main.tex
\subsection{The symmetric subspace} \label{subsection:symmetric-subspace}

In this section we present the facts we will require about the symmetric subspace.
For the reader interested in more background, we suggest the first section of the notes of Aram Harrow \cite{church}.\footnote{While those notes are intended for a quantum information audience, the reader only needs to tolerate bra-ket notation in order to follow the first section.}
The main purpose of this section is to prove Theorem~\ref{theorem:symmetric-subspace}.
This theorem is stated as Proposition 6 in \cite{church}, but we give a somewhat different proof which we find simpler and more direct.

The \emph{symmetric subspace} of $(\C^n)^{\otimes r}$, written $\vee^r \C^n$, is the linear subspace that is fixed under permutations of the $r$ tensor factors:
\[
    \vee^r \C^n \dfn \{\bT \in (\C^n)^{\otimes r} \mid \pi(\bT) = \bT \text{ for all } \pi \in S_r\},
\]
where $S_r$ is the symmetric group and $\pi \in S_r$ acts by permuting the tensor factors.
The symmetric subspace has dimension $n[r] \dfn \binom{n+r-1}{r}$.

The symmetric subspace can equivalently be defined as the span of all symmetric rank-1 tensors, as shown by the following lemma.
This lemma holds over any vector space, but we prove it for $\C^n$ since this is the only case we will use.

\begin{lemma} \label{lemma:symmetric-span}
    The symmetric subspace is spanned by the set of rank-1 symmetric tensors.
\end{lemma}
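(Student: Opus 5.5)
Two inclusions need to be shown. The easy one is that every rank-1 symmetric tensor $\bu^{\otimes r}$ lies in $\vee^r \C^n$: permuting the tensor factors of $\bu \otimes \dots \otimes \bu$ leaves it unchanged. Since $\vee^r \C^n$ is a linear subspace, it then contains the span of all such tensors.

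For the reverse inclusion, I will first reduce to symmetrized basis tensors. Let $\sym \dfn \frac{1}{r!}\sum_{\pi \in S_r} \pi$ be the symmetrizer. It is a linear projection onto $\vee^r \C^n$, since $\sym \bT = \bT$ for $\bT \in \vee^r \C^n$ and $\pi \sym = \sym$ for every $\pi$. Any $\bT \in \vee^r \C^n$ can be expanded in the standard basis as $\bT = \sum c_{i_1 \dots i_r}\, \be_{i_1}\otimes\dots\otimes\be_{i_r}$. Applying $\sym$ gives $\bT = \sum c_{i_1\dots i_r}\, \sym(\be_{i_1}\otimes\dots\otimes\be_{i_r})$. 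So it suffices to show that each symmetrized product $\sym(\bv_1\otimes\dots\otimes\bv_r)$ lies in the span of tensors of the form $\bu^{\otimes r}$.

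The key step is a polarization identity. I would take independent uniformly random signs $\epsilon_1,\dots,\epsilon_r \in \{\pm1\}$ and set $\bu = \sum_j \epsilon_j \bv_j$. Then I would show that
\[
    \E\Big[\epsilon_1\cdots\epsilon_r\,\Big(\sum_{j} \epsilon_j \bv_j\Big)^{\otimes r}\Big] = r!\,\sym(\bv_1\otimes\dots\otimes\bv_r).
\]
To verify this, expand $(\sum_j \epsilon_j\bv_j)^{\otimes r}$ multilinearly as a sum over index maps $j:[r]\to[r]$. Each term carries the sign $\epsilon_{j(1)}\cdots\epsilon_{j(r)}$, so its coefficient in the expectation is $\E[\epsilon_1\cdots\epsilon_r\,\epsilon_{j(1)}\cdots\epsilon_{j(r)}]$. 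This is nonzero, and equal to $1$, exactly when every index appears an odd number of times in $j(1),\dots,j(r)$ together with the extra factor $\epsilon_1\cdots\epsilon_r$. Since $j$ has only $r$ values to distribute among $r$ indices, this forces $j$ to be a bijection. The surviving terms are therefore exactly the $r!$ permuted products, which gives the identity.

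The left-hand side is a finite average, over $2^r$ sign patterns, of tensors of the form $\pm\bu^{\otimes r}$. Hence $\sym(\bv_1\otimes\dots\otimes\bv_r)$ lies in the span of rank-1 symmetric tensors, which completes the reverse inclusion. The only step needing care is the parity bookkeeping in the polarization identity; the rest is routine. An alternative would be to differentiate $(\sum_j t_j\bv_j)^{\otimes r}$ in $t_1,\dots,t_r$ at $0$, which works in the same way, but the sign-averaging version stays purely finite-dimensional and algebraic.
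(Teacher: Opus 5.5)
Your proof is correct and matches the paper's: it also reduces to symmetrizations of product tensors and uses the same sign-averaging identity $\sum_{\pi \in S_r} \bv_{\pi(1)} \otimes \dots \otimes \bv_{\pi(r)} = \E_{a \in \{-1,1\}^r}[(a_1 \cdots a_r)(a_1 \bv_1 + \dots + a_r \bv_r)^{\otimes r}]$, justified by noting that only the terms containing every $a_i$ survive. You additionally write out the easy inclusion and the reduction via the symmetrizer, both of which the paper leaves implicit.
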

\begin{proof}
    It suffices to show that, for any $\bv_1, \dots, \bv_r \in \C^n$, the symmetrization of $\bv_1 \otimes \dots \otimes \bv_r$ can be written as a linear combination of rank-1 symmetric tensors.
    This can be done using a Fourier transform:
    \[
        \sum_{\pi \in S_r} \bv_{\pi(1)} \otimes \dots \otimes \bv_{\pi(r)} = \E_{a \in \{-1,1\}^r}[(a_1 \dots a_r) (a_1 \bv_1 + \dots + a_r \bv_r)^{\otimes r}].
    \]
    To see why this equation holds, simply expand $(a_1 \bv_1 + \dots + a_r \bv_r)^{\otimes r}$ and observe that the only terms that survive are the ones which contain every $a_i$.
\end{proof}

In the case of $\C^n$, it turns out that symmetric rank-1 tensors effectively capture the linear structure of $\vee^r \C^n$ in a very strong sense.
The essential fact is Lemma~\ref{lemma:irrep}, which does \emph{not} hold over arbitrary vector spaces (for instance, it fails for $\R^n$).

\begin{lemma} \label{lemma:irrep}
    Let $\bv \in \C^n$ be any unit vector.
    If $U$ is a random unitary such that $U \bv = \bv$, then $\E U^{\otimes r} = (\bv \bv^\dagger)^{\otimes r}$ acts on the symmetric subspace as the projection onto $\bv^{\otimes r}$.
\end{lemma}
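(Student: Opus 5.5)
The plan is to use only one feature of the distribution of $U$: invariance under a phase rotation on the orthogonal complement of $\bv$. I read ``a random unitary such that $U\bv = \bv$'' as the Haar measure on the stabilizer subgroup $\{U : U\bv = \bv\}$. Write $P \dfn \bv\bv^\dagger$. Every such $U$ decomposes uniquely as $U = P + V$, where $V \dfn U(I - P)$ maps $\bv^\perp$ into itself and kills $\bv$. Under the Haar measure, $V$ is distributed as a Haar-random unitary on the $(n-1)$-dimensional space $\bv^\perp$, extended by zero on $\bv$.

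The first step is to expand $U^{\otimes r} = (P + V)^{\otimes r}$ multilinearly. This gives a sum over subsets $S \subseteq [r]$ of the tensor products that place $V$ in the factors indexed by $S$ and $P$ in the remaining factors. The term with $S = \emptyset$ is exactly $P^{\otimes r} = (\bv\bv^\dagger)^{\otimes r}$.

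The second step shows that every term with $S \ne \emptyset$ has zero expectation. For any $\theta \in \R$, the map $U \mapsto P + e^{\ii\theta} V$ is multiplication by the unitary $P + e^{\ii\theta}(I - P)$, which lies in the stabilizer. Hence this map preserves the Haar measure on the stabilizer. It sends the term indexed by $S$ to $e^{\ii\theta\abs{S}}$ times itself, so the expectation $X_S$ of that term satisfies $X_S = e^{\ii\theta\abs{S}} X_S$ for all $\theta$. Choosing $\theta$ with $e^{\ii\theta\abs{S}} \ne 1$, which is possible whenever $\abs{S} \ge 1$, forces $X_S = 0$. It follows that
\[
    \E U^{\otimes r} = (\bv\bv^\dagger)^{\otimes r} = \bv^{\otimes r}(\bv^{\otimes r})^\dagger
\]
holds as an operator on all of $(\C^n)^{\otimes r}$.

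Finally, because $\bv$ is a unit vector, $\bv^{\otimes r}$ is also a unit vector, and it is symmetric. So this operator is the orthogonal projection onto the line spanned by $\bv^{\otimes r}$. It maps $\vee^r\C^n$ into itself, and restricted to $\vee^r\C^n$ it is the projection onto $\bv^{\otimes r}$, which is the claim.

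The only delicate point is justifying the phase invariance of the measure, i.e.\ that $P + e^{\ii\theta}(I-P)$ belongs to the stabilizer group, so that left multiplication by it preserves Haar measure. It is worth remarking why this is where $\C$ is needed. Over $\R$ the only available phases are $\pm 1$, which only kill the terms with $\abs{S}$ odd. For instance, with $r = 2$ the term with $S = [2]$ leaves behind $\E V^{\otimes 2} \ne 0$, matching the paper's comment that Lemma~\ref{lemma:irrep} fails over $\R^n$.
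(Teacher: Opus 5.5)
Your proof is correct. It rests on the same mechanism as the paper's: the distribution of $U$ is invariant under multiplication by the phase unitaries $\bv\bv^\dagger + \lambda(I - \bv\bv^\dagger)$ with $\abs{\lambda} = 1$, and $\E_\lambda \lambda^k = 0$ for $k > 0$ then removes everything except the component along $\bv$.

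The difference is where the splitting happens. The paper splits the input vector as $\bw = \bw_\parallel + \bw_\perp$. It averages $(\bw_\parallel + \lambda \bw_\perp)^{\otimes r}$ to get $\E U^{\otimes r} \bw^{\otimes r} = \bw_\parallel^{\otimes r}$, and then invokes Lemma~\ref{lemma:symmetric-span} to pass from rank-one symmetric tensors to all of $\vee^r \C^n$. You split the operator as $U = P + V$ and kill every cross term of $(P+V)^{\otimes r}$ directly. This bypasses the spanning lemma entirely. It also yields $\E U^{\otimes r} = (\bv\bv^\dagger)^{\otimes r}$ as an identity on all of $(\C^n)^{\otimes r}$, which is slightly more than the paper's proof establishes, since the paper only concludes the equality on the symmetric subspace. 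You also make explicit the invariance of the measure, which the paper uses silently when it inserts $\E_\lambda$.

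In the paper's context the vector-level version is the more economical choice. Lemma~\ref{lemma:symmetric-span} is needed anyway in the proof of Theorem~\ref{theorem:symmetric-subspace}, and only the action on $\vee^r \C^n$ is used there.
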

\begin{proof}
    Given any vector $\bw \in \C^n$, write $\bw = \bw_{\parallel} + \bw_{\perp}$ for its decomposition into components parallel and orthogonal to $\bv$.
    Letting $\lambda$ be a random complex number of modulus 1,
    \begin{align*}
        \E U^{\otimes r} \bw^{\otimes r} &= \E U^{\otimes r} (\bw_{\parallel} + \bw_{\perp})^{\otimes r} \\
        &= \E_U U^{\otimes r} \E_{\lambda} (\bw_{\parallel} + \lambda \bw_{\perp})^{\otimes r}.
    \end{align*}
    Since $\E_{\lambda} \lambda^k = 0$ for any $k > 0$, we have $\E_{\lambda} (\bw_{\parallel} + \lambda \bw_{\perp})^{\otimes r} = \bw_{\parallel}^{\otimes r}$.
    This determines the action of $\E U^{\otimes r}$ on any $\bw^{\otimes r}$; by Lemma~\ref{lemma:symmetric-span} it follows that $\E U^{\otimes r} = (\bv \bv^{\dagger})^{\otimes r}$ on the symmetric subspace.
\end{proof}

Letting $\sym^{(n,r)}$ be the orthogonal projection onto the symmetric subspace of $(\C^n)^{\otimes r}$, we have the following important fact.

\begin{theorem} \label{theorem:symmetric-subspace}
    If $\bpsi \in \C^n$ is a random unit vector, then
    \[
        \E[(\bpsi \bpsi^\dagger)^{\otimes r}] = \frac{1}{n[r]} \, \sym^{(n,r)}.
    \]
\end{theorem}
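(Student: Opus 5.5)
Let $M \dfn \E[(\bpsi \bpsi^\dagger)^{\otimes r}]$. The plan is to show that $M$ is a scalar multiple of $\sym^{(n,r)}$ and then fix the scalar by taking traces.

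\emph{Support of $M$.} Each $(\bpsi\bpsi^\dagger)^{\otimes r} = \bpsi^{\otimes r}(\bpsi^{\otimes r})^\dagger$ is a rank-one positive operator whose range lies in $\vee^r \C^n$. Hence $M$ is Hermitian, and it satisfies
\[
    M = \sym^{(n,r)} M \sym^{(n,r)}.
\]
In particular, $M$ vanishes on the orthogonal complement of the symmetric subspace. So it suffices to understand $M$ restricted to $\vee^r \C^n$.

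\emph{Commutation with unitaries.} By Haar invariance, $U\bpsi$ has the same distribution as $\bpsi$ for every unitary $U$. Therefore
\[
    U^{\otimes r} M (U^{\otimes r})^\dagger = M,
\]
so $M$ commutes with every $U^{\otimes r}$.

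\emph{Schur-type step via Lemma~\ref{lemma:irrep}.} Fix a unit vector $\bv$ and average the commutation relation over a random unitary $U$ with $U\bv = \bv$. By Lemma~\ref{lemma:irrep}, on the symmetric subspace,
\[
    (\bv\bv^\dagger)^{\otimes r} M = M (\bv\bv^\dagger)^{\otimes r}.
\]
Write $P_\bv \dfn (\bv\bv^\dagger)^{\otimes r}$, the projection onto $\bv^{\otimes r}$. The relation says $P_\bv M P_\bv = M P_\bv$, so $M\bv^{\otimes r}$ lies in the span of $\bv^{\otimes r}$. Thus $M\bv^{\otimes r} = c(\bv)\,\bv^{\otimes r}$, with
\[
    c(\bv) = \inner{\bv^{\otimes r}}{M \bv^{\otimes r}}.
\]
This scalar is independent of $\bv$: any two unit vectors are related by a unitary $U$, and $M$ commutes with $U^{\otimes r}$. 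So $M$ acts as a fixed scalar $c$ on every symmetric rank-one tensor. By Lemma~\ref{lemma:symmetric-span} these tensors span $\vee^r\C^n$, so $M = c\,\sym^{(n,r)}$.

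\emph{Normalization.} Take traces. On one side,
\[
    \operatorname{tr} M = \E \norm{\bpsi}^{2r} = 1.
\]
On the other, $\operatorname{tr} \sym^{(n,r)} = \dim \vee^r\C^n = n[r]$. Hence $c = 1/n[r]$.

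\emph{Main obstacle.} The delicate step is the application of Lemma~\ref{lemma:irrep}. That lemma describes $\E U^{\otimes r}$ only on the symmetric subspace. The averaged relation is obtained by multiplying the commutation relation on the right by $U^{\otimes r}$ and then taking expectations on both sides, which must be justified. This works because $M$ both maps into $\vee^r\C^n$ and vanishes off it, so every operator involved stays within the symmetric subspace.
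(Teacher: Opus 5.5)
Your proof is correct and follows the paper's argument essentially step for step: commutation of $M$ with every $U^{\otimes r}$, averaging over the stabilizer of $\bv$ via Lemma~\ref{lemma:irrep} to get $M\bv^{\otimes r} = c\,\bv^{\otimes r}$ with $c$ independent of $\bv$, spanning via Lemma~\ref{lemma:symmetric-span}, and fixing $c = 1/n[r]$ by taking traces. The only difference is that the paper applies the averaging directly to the vector $\bv^{\otimes r}$, writing $M\bv^{\otimes r} = \E_U M U^{\otimes r}\bv^{\otimes r} = \E_U U^{\otimes r} M\bv^{\otimes r}$. This removes your ``main obstacle'', since $M\bv^{\otimes r}$ already lies in $\vee^r\C^n$; meanwhile your explicit argument that $M$ vanishes off $\vee^r\C^n$ fills in a point the paper only asserts.
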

\begin{proof}
    Let $\bT = \E (\bpsi \bpsi^{\dagger})^{\otimes r}$.
    First observe that $\bT$ commutes with $U^{\otimes r}$ for every unitary $U$.
    Therefore if $\bv \in \C^n$ is any unit vector and $U$ is a random unitary such that $U \bv = \bv$, Lemma~\ref{lemma:irrep} implies that
    \[
        \bT \bv^{\otimes r} = \E_U \bT U^{\otimes r} \bv^{\otimes r} = \E_U U^{\otimes r} \bT \bv^{\otimes r} = \bv^{\otimes r} \cdot \inner{\bv^{\otimes r}}{\bT \bv^{\otimes r}}.
    \]
    Furthermore $\inner{\bv^{\otimes r}}{\bT \bv^{\otimes r}}$ has the same value for every unit vector $\bv$ (again because $\bT$ commutes with every $U^{\otimes r}$).
    Let $c = \inner{\bv^{\otimes r}}{\bT \bv^{\otimes r}}$ be this value.
    Since vectors of the form $\bv^{\otimes r}$ span the symmetric subspace by Lemma~\ref{lemma:symmetric-span}, and $\bT$ is 0 on the subspace orthogonal to the symmetric subspace, we must have $\bT = c \, \sym^{(n,r)}$.

    It only remains to compute $c$.
    This can be done by taking the trace of both sides of the equation $\bT = c \, \sym^{(n,r)}$:
    \[
        \Tr(\bT) = \E \Tr((\bpsi \bpsi^{\dagger})^{\otimes r}) = \E[\Tr(\bpsi \bpsi^{\dagger})^r] = 1
    \]
    and $\Tr(c \, \sym^{(n,r)}) = c \, n[r]$.
\end{proof}

Theorem~\ref{theorem:symmetric-subspace} gives us a convenient way to sketch symmetric tensors, and is the only result we will need about the symmetric subspace.
The utility of Theorem~\ref{theorem:symmetric-subspace} is captured by the following statement, which is a corollary of the theorem.
We state it as a ``lemma'' because it is treated as such for the rest of the paper.

\begin{lemma} \label{lemma:symmetric-subspace-estimator}
    Let $\bT \in (\C^n)^{\otimes r}$ be a symmetric tensor and let $\bx \in \C^n$ be a vector.
    Let $\xi \dfn n[r] \cdot \inner{\bT}{\bpsi^{\otimes r}} \cdot \inner{\bpsi}{\bx}^r$ where $\bpsi \in \C^n$ is a random unit vector.
    Then $\E[\xi] = \inner{\bT}{\bx^{\otimes r}}$ and $\Var[\xi] \le 4^r \cdot \fnorm{\bT}^2 \cdot \norm{\bx}^{2r}$.
\end{lemma}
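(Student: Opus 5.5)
The plan is to express both moments of $\xi$ as quadratic forms in the operator $\E[(\bpsi\bpsi^\dagger)^{\otimes m}]$ and then apply Theorem~\ref{theorem:symmetric-subspace}. For the expectation we use $m = r$; for the second moment we use $m = 2r$.

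\textbf{Expectation.} First rewrite $\inner{\bpsi}{\bx}^r = \inner{\bpsi^{\otimes r}}{\bx^{\otimes r}}$. This gives
\[
\xi = n[r]\,\bT^\dagger (\bpsi\bpsi^\dagger)^{\otimes r}\bx^{\otimes r}.
\]
By linearity of expectation and Theorem~\ref{theorem:symmetric-subspace},
\[
\E[\xi] = \bT^\dagger \sym^{(n,r)} \bx^{\otimes r}.
\]
Since $\bx^{\otimes r}$ is already symmetric, $\sym^{(n,r)}\bx^{\otimes r} = \bx^{\otimes r}$, so $\E[\xi] = \inner{\bT}{\bx^{\otimes r}}$. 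We could equally move the projection onto $\bT$, using that $\bT$ is symmetric and $\sym^{(n,r)}$ is self-adjoint.

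\textbf{Second moment.} We bound $\Var[\xi] \le \E\abs{\xi}^2$. Compute
\[
\abs{\xi}^2 = n[r]^2\,\abs{\inner{\bT}{\bpsi^{\otimes r}}}^2\,\abs{\inner{\bpsi^{\otimes r}}{\bx^{\otimes r}}}^2 .
\]
Set $\bB \dfn \bT\otimes\bx^{\otimes r} \in (\C^n)^{\otimes 2r}$. Then the two squared moduli combine into a single form,
\[
\abs{\xi}^2 = n[r]^2\,\bB^\dagger(\bpsi\bpsi^\dagger)^{\otimes 2r}\bB .
\]
The key check here is that the conjugations line up. Indeed, $\abs{\inner{\bT}{\bpsi^{\otimes r}}}^2 = \bT^\dagger(\bpsi\bpsi^\dagger)^{\otimes r}\bT$, and similarly for $\bx^{\otimes r}$, and a product of two such forms is the form of the tensor product. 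Applying Theorem~\ref{theorem:symmetric-subspace} with $2r$ in place of $r$ gives
\[
\E\abs{\xi}^2 = \frac{n[r]^2}{n[2r]}\,\norm{\sym^{(n,2r)}\bB}^2 \le \frac{n[r]^2}{n[2r]}\,\fnorm{\bT}^2\norm{\bx}^{2r},
\]
because a projection does not increase the norm.

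\textbf{Combinatorial bound.} It remains to show $n[r]^2/n[2r] \le 4^r$. Write $n[r] = n^{\overline{r}}/r!$, where $n^{\overline{r}} = n(n+1)\cdots(n+r-1)$ is the rising factorial. Then
\[
\frac{n[r]^2}{n[2r]} = \binom{2r}{r}\frac{(n^{\overline{r}})^2}{n^{\overline{2r}}}.
\]
Next factor $n^{\overline{2r}} = n^{\overline{r}}\cdot(n+r)\cdots(n+2r-1)$. The second factor is at least $n^{\overline{r}}$ termwise, so the last fraction is at most $1$. Hence the ratio is at most $\binom{2r}{r} \le 4^r$.

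The only delicate point is the conjugation bookkeeping in forming $\bB$, together with the observation that symmetry of $\bT$ is used only in the expectation step and not in the variance bound.
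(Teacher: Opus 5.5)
Your proof is correct and follows the paper's argument step for step: the expectation comes from Theorem~\ref{theorem:symmetric-subspace} at order $r$, the second moment from the same theorem at order $2r$ applied to $\bT \otimes \bx^{\otimes r}$, and the final bound is $n[r]^2/n[2r] \le \binom{2r}{r} \le 4^r$ (you also supply the termwise comparison of rising factorials that the paper leaves implicit). One small correction to your closing remark: your expectation step uses only $\sym^{(n,r)}\bx^{\otimes r} = \bx^{\otimes r}$, so the symmetry of $\bT$ is not actually used anywhere in your argument, not even in the expectation step.
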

\begin{proof}
    That $\E[\xi] = \inner{\bT}{\bx^{\otimes r}}$ follows immediately from Theorem~\ref{theorem:symmetric-subspace}.
    For the variance,
    \begin{align*}
        \Var[\xi] &\le \E \abs{\xi}^2 \\
        &= n[r]^2 \cdot \E \abs{\inner{\bT}{\bpsi^{\otimes r}}}^2 \cdot \abs{\inner{\bpsi^{\otimes r}}{\bx^{\otimes r}}}^{2} \\
        &= n[r]^2 \cdot \E\, \inner{\bT \otimes \bx^{\otimes r}}{\bpsi^{\otimes 2r}} \inner{\bpsi^{\otimes 2r}}{\bT \otimes \bx^{\otimes r}} \\
        &= \frac{n[r]^2}{n[2r]} \inner{\bT \otimes \bx^{\otimes r}}{\sym^{(n,2r)} (\bT \otimes \bx^{\otimes r})} \\
        &\le \frac{n[r]^2}{n[2r]} \cdot \fnorm{\bT}^2 \cdot \norm{\bx}^{2r}
    \end{align*}
    where the last equality follows again from Theorem~\ref{theorem:symmetric-subspace}.
    Finally, we complete the proof by observing that $n[r] = \binom{n+r-1}{r} = n (n+1) \cdots (n+r-1) / r!$ and
    \begin{equation*}
        \frac{n[r]^2}{n[2r]} = \frac{(2r)!}{(r!)^2} \cdot \frac{(n (n+1) \cdots (n+r-1))^2}{n (n+1) \cdots (n+2r-1)} \le \frac{(2r)!}{(r!)^2} = \binom{2r}{r} \le 4^r.
        \tag*{\qedhere}
    \end{equation*}
\end{proof}

Interestingly, if we had instead used a random \emph{real} unit vector $\bpsi \in \R^n$, then the conclusions of Theorem~\ref{theorem:symmetric-subspace} and Lemma~\ref{lemma:symmetric-subspace-estimator} would not hold.
Even for $n = r = 2$, there does not exist a distribution over real vectors $\bpsi \in \R^n$ such that $\E[(\bpsi \bpsi^T)^{\otimes r}]$ is the projection onto the symmetric subspace of $(\R^n)^{\otimes r}$.
To see why, let $\{\be_1, \be_2\}$ be an orthonormal basis of $\R^2$ and consider the quantity $(\be_1 \otimes \be_1)^T \E[(\bpsi \bpsi^T)^{\otimes 2}] (\be_2 \otimes \be_2) = \E[\inner{\be_1}{\bpsi}^2 \cdot \inner{\be_2}{\bpsi}^2]$.
If $\E[(\bpsi \bpsi^T)^{\otimes r}]$ is the projection onto the symmetric subspace, then this quantity should be 0, but for real distributions this forces one of the coordinates of $\bpsi$ to be 0 with probability 1.

% !TEX root = main.tex
\subsection{Stable arithmetic circuits} \label{subsection:stable-circuits}

Our stability condition is closely related to, and motivated by, the modulus of continuity.

\begin{definition}
    Let $f : \C^n \to \C$ and $\alpha : \R \to \R \cup \{\infty\}$ be functions.
    We say that a function $f$ has \emph{modulus of continuity} $\alpha$ at a point $\bx_* \in \C^n$ if
    \[
        \abs{f(\bx) - f(\bx_*)} \le \alpha(\norm{\bx - \bx_*})
    \]
    for all inputs $\bx$.
\end{definition}

Cauchy's estimate implies that the modulus of continuity is related to a bound on the decay of Taylor coefficients, as explained in the following fact.
This fact will not be used in any of our proofs, but it guides our definition of stability.

\begin{fact*}
    Let $f$ be an arithmetic circuit with $n$ inputs.
    Let $\bT^{(r)}$ be the $r$th Taylor coefficient of $f$, so that $f(\bx) = \sum_{r \ge 0} \inner{\bT^{(r)}}{\bx^{\otimes r}}$.
    If $f$ has modulus of continuity $\alpha$ at $\bzero$, then for all $B > 0$ and all integers $r \ge 1$,
    \begin{equation} \label{inequality:modulus-of-continuity}
        \opnorm{\bT^{(r)}} \le \frac{\alpha(B)}{B^r}.
    \end{equation}
    Conversely, if Inequality~(\ref{inequality:modulus-of-continuity}) holds for all $B > 0$ and all integers $r \ge 1$, then for any $\beta > 1$, $f$ has modulus of continuity $\alpha'$ at $\bzero$ where $\alpha'(B) \dfn \alpha(\beta \, B) / (\beta - 1)$.
\end{fact*}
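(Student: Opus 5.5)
Both directions come from restricting $f$ to complex lines and applying the one-variable Cauchy estimate. An arithmetic circuit computes a polynomial, so $f$ is entire. For unit vectors $\bu_1, \dots, \bu_r$ the quantity $\inner{\bT^{(r)}}{\bu_1 \otimes \dots \otimes \bu_r}$ is a multilinear expression, and we first reduce to the diagonal case $\bu_1 = \dots = \bu_r$. Here we pay attention to the convention $\inner{\bu}{\bv} = \bu^\dagger \bv$. Since $f(\bx) = \sum_r \inner{\bT^{(r)}}{\bx^{\otimes r}}$ is holomorphic in $\bx$, the conjugate-linear slot carries the tensor, and we may take $\bT^{(r)}$ symmetric. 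For symmetric tensors, Banach's theorem on symmetric multilinear forms over Hilbert spaces says the sup over independent unit vectors equals the sup over $\bu_1 = \dots = \bu_r = \bu$. If we want to avoid citing it, we can bound via polarization (the identity in the proof of Lemma~\ref{lemma:symmetric-span}), but that loses a factor like $r^r/r!$. We will therefore cite Banach's result, and we expect this reduction to be the main point needing care.

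\emph{Forward direction.} Fix a unit vector $\bu$ and set $g(\zeta) \dfn f(\zeta \bu) - f(\bzero) = \sum_{r \ge 1} \inner{\bT^{(r)}}{\bu^{\otimes r}} \zeta^r$ for $\zeta \in \C$. On the circle $\abs{\zeta} = B$ we have $\norm{\zeta \bu} = B$, so modulus of continuity gives $\abs{g} \le \alpha(B)$. Cauchy's estimate then gives $\abs{\inner{\bT^{(r)}}{\bu^{\otimes r}}} \le \alpha(B)/B^r$ for every $r \ge 1$, and the diagonal reduction upgrades this to the operator norm.

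\emph{Converse.} Take $\bx$ with $\norm{\bx} = B$ and write $\bx = B\bu$. Then
\[
    \abs{f(\bx) - f(\bzero)} \le \sum_{r \ge 1} \abs{\inner{\bT^{(r)}}{\bx^{\otimes r}}} \le \sum_{r \ge 1} B^r \opnorm{\bT^{(r)}} \le \sum_{r\ge1} B^r \frac{\alpha(\beta B)}{(\beta B)^r} = \frac{\alpha(\beta B)}{\beta - 1}.
\]
The middle step applies the hypothesis with $\beta B$ in place of $B$, and the last is the geometric series $\sum_{r \ge 1} \beta^{-r} = 1/(\beta - 1)$. The series converges absolutely because $f$ is a polynomial, so only finitely many terms are nonzero. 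The case $B = 0$ is trivial, and an infinite $\alpha$ makes the claim vacuous.
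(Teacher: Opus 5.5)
Your proposal is correct and follows the paper's proof essentially step for step. For the forward direction you apply Cauchy's estimate to $\zeta \mapsto f(\zeta \bu)$ on the circle $\abs{\zeta} = B$, then pass to the operator norm using the fact that a symmetric tensor's operator norm is attained on symmetric rank-one tensors (the paper cites this from the symmetric-tensor-norm literature; it is Banach's theorem), and for the converse you use the same geometric-series bound with the hypothesis applied at $\beta B$.
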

\begin{proof}
    For any unit vector $\bpsi \in \C^n$, let $g_{\bpsi} : \C \to \C$ be the function defined by $g_{\bpsi}(z) = f(z \bpsi)$.
    Then for all $z \in \C$, $\abs{g_{\bpsi}(z) - g_{\bpsi}(0)} \le \alpha(\abs{z})$ because $f$ has modulus of continuity $\alpha$ at $\bzero$.
    By Cauchy's estimate \cite[Theorem 10.26]{Rud87}, it follows that
    \[
        \frac{1}{r!} \, \abs{\left.\frac{d^r g_{\bpsi}(z)}{dz^r}\right|_{z=0}} = \abs{\inner{\bT^{(r)}}{\bpsi^{\otimes r}}} \le \frac{\alpha(B)}{B^r}
    \]
    for all $B > 0$ and all integers $r \ge 1$.
    Inequality~(\ref{inequality:modulus-of-continuity}) follows by taking the maximum over all unit vectors $\bpsi \in \C^n$ and using the fact that $\opnorm{\bT} = \sup_{\bpsi \in \C^n, \norm{\bpsi}=1} \abs{\inner{\bT}{\bpsi^{\otimes r}}}$ for any symmetric tensor $\bT \in (\C^n)^{\otimes r}$ \cite[Lemma 1(6)]{symmetric-tensor-norm}.

    For the other direction, suppose that Inequality~(\ref{inequality:modulus-of-continuity}) holds for all $B > 0$ and all integers $r \ge 1$.
    Then for all $\bx \in \C^n$,
    \begin{align*}
        \abs{f(\bx) - f(\bzero)} &= \abs{\sum_{r \ge 1} \inner{\bT^{(r)}}{\bx^{\otimes r}}} \\
        &\le \sum_{r \ge 1} \norm{\bx}^r \, \opnorm{\bT^{(r)}} \\
        &= \sum_{r \ge 1} \beta^{-r} \, (\beta \norm{\bx})^r \opnorm{\bT^{(r)}} \\
        &\le \sum_{r \ge 1} \beta^{-r} \, \alpha(\beta \norm{\bx}) \\
        &= \frac{1}{\beta-1} \, \alpha(\beta \, \norm{\bx})
    \end{align*}
    where we have used Inequality~(\ref{inequality:modulus-of-continuity}) for the second inequality.
\end{proof}

Unfortunately, it is not enough that a function have bounded modulus of continuity for the method of sketching in this paper to work.
We need the following stronger condition.

\begin{definition}
    Let $f$ be an arithmetic circuit with $n$ inputs.
    Let $\bT^{(r)}$ be the $r$th Taylor coefficient of $f$, so that $f(\bx) = \sum_{r \ge 0} \inner{\bT^{(r)}}{\bx^{\otimes r}}$.
    We say that $f$ is \emph{$\alpha$-stable} if
    \[
        \fnorm{\bT^{(r)}} \le \frac{\alpha(B)}{B^r}
    \]
    for every $B > 0$ and every integer $r \ge 1$.
\end{definition}

Note that the stability condition is identical to Inequality~(\ref{inequality:modulus-of-continuity}), except that it uses the Frobenius norm instead of the operator norm.
Stability is therefore a significantly stronger condition: In general the only relationship between Frobenius and operator norms for a symmetric tensor $\bT \in (\C^n)^{\otimes r}$ is
\[
    \opnorm{\bT} \le \fnorm{\bT} \le \sqrt{n[r]} \, \opnorm{\bT}.
\]
Stability can be viewed as a combination of a bound on the modulus of continuity and a bound on the stable tensor rank $\fnorm{\bT}^2 / \opnorm{\bT}^2$ of the Taylor coefficient tensors.

% !TEX root = main.tex
\section{Data deletion} \label{section:data-deletion}

This section contains the main ideas and results of this paper.
We begin by explaining the data deletion problem in Section~\ref{subsection:data-deletion-problem}.
We then present our data deletion scheme in Section~\ref{subsection:data-deletion-scheme}.
We conclude with our experiments on microgpt in Section~\ref{subsection:microgpt}.

% !TEX root = main.tex
\subsection{The data deletion problem} \label{subsection:data-deletion-problem}
A data deletion scheme allows us to quickly predict model behavior after deleting training data.
It consists of two phases:
\begin{enumerate}
    \item \emph{precomputation}, which is allowed to expend resources comparable to those required to train the entire model; and
    \item \emph{prediction}, which should use the precomputed data to perform inference as if the model was trained without a given subset of training data.
\end{enumerate}
Using random access to the precomputed data, prediction should run in time much faster than re-training.
Ideally, it should run in time independent of the number of training examples $n$.

We assume that training and inference are deterministic, which is without loss of generality as one can always fix the randomness.
See \cite[Section 2.1]{magic} for a discussion on the benefits of this approach, which they refer to as ``single-model'' predictive data attribution.
We adopt a similar terminology and setting as in that paper, although crucially we do \emph{not} assume that $\phi$ is known during precomputation.

\begin{itemize}
    \item There are $n$ training examples and the model has $p$ parameters.
    \item $\calA$ is a \emph{learning algorithm} specified by an arithmetic circuit with $n$ inputs and $p$ outputs.
    The input to $\calA$ is a vector of \emph{downweights} $\bw \in [0,1]^n$ where $w_i$ specifies an amount to ``remove'' example $i$.
    For instance, if the learning algorithm is a variation of gradient descent then we can multiply the loss on example $i$ by $1-w_i$ before backpropagation (see Section~\ref{subsection:technical-overview}).
    When $\bw = \bzero$, the model should be trained as usual; when $w_i = 1$, $\calA(\bw)$ should contain no information about example $i$.
    \item $\phi$ is a \emph{measurement function} specified by an arithmetic circuit with $p$ inputs and 1 output.
    \item $f$ is the composition $\phi \circ \calA$.
    It is an arithmetic circuit with $n$ inputs and 1 output.
\end{itemize}

% !TEX root = main.tex
\subsection{Our data deletion scheme} \label{subsection:data-deletion-scheme}

Our data deletion scheme is given in Algorithms~\ref{algorithm:precomputation} and~\ref{algorithm:prediction}.
The correctness of our scheme relies only on the assumption that $f = \phi \circ \calA$ is a stable arithmetic circuit in the sense of Section~\ref{subsection:stable-circuits}.
Note that, while precomputation does not know which measurement $\phi$ will ultimately be used during prediction, the accuracy of the prediction nonetheless does not require any condition on $\calA$ directly.
The prediction will be accurate for any $\phi$ such that $\phi \circ \calA$ is stable, even if $\phi$ or $\calA$ alone is highly unstable.

In order to achieve exponential concentration, our prediction algorithm will make use of the median-of-means estimator.
We briefly recall how the median-of-means works here.

For $\xi_1, \dots, \xi_{k} \in \C$, let $\mom_m(\xi_1, \dots, \xi_k)$ be the following algorithm.
\begin{enumerate}
    \item Partition $[k]$ into $m$ equal-sized blocks $B_1, \dots, B_m$.
    \item Compute $\eta_i \dfn \frac{1}{\abs{B_i}} \sum_{j \in B_i} \xi_j$ for each $i \in [m]$.
    \item Output $\median(\{\Re(\eta_i)\}_{i \in [m]}) + \ii \cdot \median(\{\Im(\eta_i)\}_{i \in [m]})$.
\end{enumerate}

Note that $\mom_m(\xi_1, \dots, \xi_k)$ can be computed in $O(k)$ operations.

\begin{lemma} \label{lemma:mom}
    If $\calD$ is a distribution on $\C$ with mean $\mu$ and finite variance $\sigma^2$, then
    \[
        \Pr_{\xi_1, \dots, \xi_{k} \sim \calD}\!\left[\abs{\mom_m(\xi_1, \dots, \xi_{k}) - \mu} > \sigma \, \sqrt{\frac{4 m}{k}}\,\right] \le 2 e^{-m/8}
    \]
\end{lemma}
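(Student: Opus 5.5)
The plan is to follow the standard median-of-means argument, treating the real and imaginary parts separately and then combining them with a union bound. Write $\xi_j = \Re(\xi_j) + \ii \Im(\xi_j)$. The real parts are i.i.d. with mean $\Re(\mu)$ and variance $\sigma_R^2$, and the imaginary parts are i.i.d. with mean $\Im(\mu)$ and variance $\sigma_I^2$. Here $\sigma^2 = \E\abs{\xi - \mu}^2 = \sigma_R^2 + \sigma_I^2$, so each of $\sigma_R, \sigma_I$ is at most $\sigma$. The block means satisfy $\Re(\eta_i) = \frac{1}{\abs{B_i}}\sum_{j \in B_i} \Re(\xi_j)$, and likewise for the imaginary part, so each coordinate of $\mom_m$ is exactly a real median-of-means.

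Next I bound a single real coordinate. Each block has size $k/m$, so $\Var[\Re(\eta_i)] \le \sigma^2 m/k$. By Chebyshev, $\Pr[\abs{\Re(\eta_i) - \Re(\mu)} > t] \le \sigma^2 m/(k t^2)$. Choosing $t \dfn \sigma\sqrt{2m/k}$ makes this at most $1/2$, and I will aim for a per-block failure probability at most $1/4$ by using a slightly larger $t$ (see below). The median of the $m$ values deviates by more than $t$ only if at least $m/2$ of the blocks deviate by more than $t$. The blocks are independent, so by Hoeffding the number of bad blocks, a sum of $m$ independent indicators each with mean at most $1/4$, reaches $m/2$ with probability at most $e^{-2m(1/4)^2} = e^{-m/8}$.

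Finally I combine the coordinates. If both the real and imaginary medians are within $t$ of their targets, then $\abs{\mom_m - \mu} \le \sqrt{2}\, t$. With $t = \sigma\sqrt{2m/k}$ this gives exactly the threshold $\sigma\sqrt{4m/k}$, and the union bound over the two coordinates gives failure probability $2e^{-m/8}$.

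The main obstacle is matching the constants exactly. The stated threshold forces $t = \sigma\sqrt{2m/k}$ per coordinate, and Chebyshev with the crude bound $\sigma_R \le \sigma$ then only gives a per-block failure probability of $1/2$, which is useless for the median. To get $1/4$, I will use the sharper per-coordinate variances $\sigma_R^2, \sigma_I^2$ and choose the per-coordinate thresholds $t_R = 2\sigma_R\sqrt{m/k}$ and $t_I = 2\sigma_I\sqrt{m/k}$. Chebyshev then gives per-block failure probability exactly $1/4$ in each coordinate. On the good event, $\abs{\mom_m - \mu}^2 \le t_R^2 + t_I^2 = 4(\sigma_R^2 + \sigma_I^2)m/k = 4\sigma^2 m/k$, which is precisely the stated bound. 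This choice resolves the constant issue, and Hoeffding plus the union bound then finish the proof as above.
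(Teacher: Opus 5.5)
Your proposal is correct and follows the paper's proof: you split into real and imaginary parts, use the per-coordinate thresholds $2\sigma_{\Re}\sqrt{m/k}$ and $2\sigma_{\Im}\sqrt{m/k}$, take a union bound, and combine via $\sigma_{\Re}^2+\sigma_{\Im}^2=\sigma^2$, exactly as the paper does. The only difference is that you derive the real-valued median-of-means bound yourself (Chebyshev gives per-block failure probability $1/4$, then Hoeffding gives $e^{-m/8}$), whereas the paper cites it as a standard result; your abandoned first attempt with the common threshold $t=\sigma\sqrt{2m/k}$ should simply be deleted.
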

\begin{proof}
    Let $\sigma_{\Re}^2$ be the variance of $\Re(\xi)$ and $\sigma_{\Im}^2$ be the variance of $\Im(\xi)$, for $\xi \sim \calD$.
    A standard argument (see, for instance, \cite[Proposition 12]{lerasle}) implies that
    \begin{align*}
        \Pr_{\xi_1, \dots, \xi_{k} \sim \calD}\!\left[\abs{\mom_m(\Re(\xi_1), \dots, \Re(\xi_{k})) - \Re(\mu)} > 2 \sigma_{\Re} \, \sqrt{\frac{m}{k}}\,\right] &\le e^{-m/8} \text{ and} \\
        \Pr_{\xi_1, \dots, \xi_{k} \sim \calD}\!\left[\abs{\mom_m(\Im(\xi_1), \dots, \Im(\xi_{k})) - \Im(\mu)} > 2 \sigma_{\Im} \, \sqrt{\frac{m}{k}}\,\right] &\le e^{-m/8}.
    \end{align*}
    Since $\Re(\mom_m(\xi_1, \dots, \xi_{k})) = \mom_m(\Re(\xi_1), \dots, \Re(\xi_{k}))$ and $\Im(\mom_m(\xi_1, \dots, \xi_{k})) = \mom_m(\Im(\xi_1), \dots, \Im(\xi_{k}))$, by a union bound we have
    \[
        \Pr_{\xi_1, \dots, \xi_{k} \sim \calD}\!\left[\abs{\mom_m(\xi_1, \dots, \xi_{k}) - \mu} > 2 \sqrt{\sigma_{\Re}^2 + \sigma_{\Im}^2} \, \sqrt{\frac{m}{k}}\,\right] \le 2 e^{-m/8}
    \]
    as desired.
\end{proof}

\begin{algorithm}[H]
\caption{$\precompute_{s,k}(\calA)$: Precomputation} \label{algorithm:precomputation}
    \KwData{Integers $s, k \ge 1$, where $s$ is used to specify a ring $R \dfn \C[z] / (z^{s+1})$.}
    \KwIn{Learning algorithm $\calA$ specified as an arithmetic circuit with $n$ inputs and $p$ outputs.}
    \KwOut{A list of directions $ \bPsi = \{\bpsi_i\}_{i \in [k]} \subseteq \C^n$ and Taylor approximation data $\bP = \{\bp_i\}_{i \in [k]} \subseteq R^p$.}
    \AlgoMetaSep
    Sample random unit vectors $\bpsi_1, \dots, \bpsi_k \in \C^n$\;
    For each $i \in [k]$, compute $\bp_i \gets \calA(z \, \bpsi_i)$ over $R$\;
    \Return $\bPsi \dfn \{\bpsi_i\}_{i \in [k]}$ and $\bP \dfn \{\bp_i\}_{i \in [k]}$\;
\end{algorithm}

We emphasize that $\precompute$ (Algorithm~\ref{algorithm:precomputation}) does not require knowledge of the measurement function $\phi$; the same precomputed data can be used for arbitrary measurements.
We also note that it is not necessary to actually store $\bPsi$, if we instead use a pseudorandom function to select the components $\psi_{i,j}$ of the directions $\bpsi_i$ as a function of $i \in [k]$ and $j \in [n]$.

\begin{algorithm}[H]
\caption{$\predict_{\bPsi, \bP, m}(D, \phi)$: Counterfactual prediction} \label{algorithm:prediction}
    \KwData{Directions $\bPsi = \{\bpsi_i\}_{i \in [k]} \subseteq \C^n$, Taylor approximation data $\bP = \{\bp_i\}_{i \in [k]} \subseteq R^p$ where $R \dfn \C[z]/(z^{s+1})$, and integer $m \ge 1$.}
    \KwIn{Deletion set $D \subseteq [n]$ and measurement function $\phi$ specified as an arithmetic circuit with $p$ inputs and 1 output.}
    \KwOut{Estimate $\nu$ for post-deletion measurement $(\phi \circ \calA)(\bone_D)$.}
    \AlgoMetaSep
    For each $i \in [k]$, compute $q_i \gets \phi(\bp_i)$ over $R$ and write
    \[
        q_i = q_{i,0} + q_{i,1} z + \dots + q_{i,s} z^s
    \]
    where $q_{i,0}, \dots, q_{i,s} \in \C$\;
    For each $r \in \{0, \dots, s\}$, compute
    \[
        \nu_r \gets n[r] \cdot \mom_m\left(\left\{\inner{\bpsi_i}{\bone_D}^r \, q_{i,r}\right\}_{i \in [k]}\right)
    \]
    where $\inner{\bpsi_i}{\bone_D}$ is evaluated by looking up the $D$ coordinates of $\bpsi_i$\;
    Compute $\nu \gets \nu_0 + \dots + \nu_s$\;
    \Return $\nu$\;
\end{algorithm}

The following theorem shows that Algorithms~\ref{algorithm:precomputation} and~\ref{algorithm:prediction} form an accurate and efficient data deletion scheme, assuming that $\phi \circ \calA$ satisfies the stability condition explained in Section~\ref{subsection:stable-circuits}.

\begin{theorem} \label{theorem:deletion}
    For any $\varepsilon, \delta > 0$, there exists a choice of integers $s, k, m \ge 1$ in Algorithms~\ref{algorithm:precomputation} and~\ref{algorithm:prediction} such that, for any learning algorithm $\calA$ taking $n$ training examples to $p$ parameters,
    \begin{itemize}
        \item precomputation requires $\tilde{O}(\size(\calA) \, \log(1/\delta) / \varepsilon^2)$ operations,
        \item the required storage is $\tilde{O}((n + p) \, \log(1/\delta) / \varepsilon^2)$, and
    \end{itemize}
    for any measurement function $\phi$ taking $p$ parameters to 1 outcome and any set $D \subseteq [n]$ of up to $d$ examples,
    \begin{itemize}
        \item predicting $(\phi \circ \calA)(\bone_D)$ requires $\tilde{O}((\size(\phi) + d) \, \log(1/\delta) / \varepsilon^2)$ operations, and
        \item if $\phi \circ \calA$ is $\alpha$-stable then the prediction $\nu$ has error $\abs{\nu - (\phi \circ \calA)(\bone_D)} \le \varepsilon \cdot \alpha(4 \sqrt{d})$ with probability $1 - \delta$,
    \end{itemize}
    where $\tilde{O}$ hides $\polylog(1/\varepsilon)$ factors.
\end{theorem}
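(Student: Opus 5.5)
The first step is to check that prediction reconstructs exactly the sketch of $f = \phi \circ \calA$. Evaluation over $R = \C[z]/(z^{s+1})$ is a ring homomorphism through the circuit, so $q_i = \phi(\calA(z\bpsi_i)) = f(z\bpsi_i)$ in $R$. Substituting $\bx = z\bpsi_i$ into $f(\bx) = \sum_r \inner{\bT^{(r)}}{\bx^{\otimes r}}$ and truncating modulo $z^{s+1}$ gives the coefficients. Conjugation conventions need care here: $\inner{\bT^{(r)}}{\bpsi^{\otimes r}}$ is linear in the second slot. I would check that $q_{i,r}\,\inner{\bpsi_i}{\bone_D}^r$, multiplied by $n[r]$, is exactly the random variable $\xi$ of Lemma~\ref{lemma:symmetric-subspace-estimator}, with $\bT = \overline{\bT^{(r)}}$ or with the roles of the two inner products swapped, applied to the symmetric tensor $\bT^{(r)}$ and $\bx = \bone_D$. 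If the conjugations do not match directly, I would conjugate the tensor, which is still symmetric with the same Frobenius norm. The lemma then gives mean $\inner{\bT^{(r)}}{\bone_D^{\otimes r}}$ and variance at most $4^r \fnorm{\bT^{(r)}}^2 d^r$. The $r=0$ term is deterministic, since $q_{i,0} = f(\bzero)$.

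Next comes the error decomposition. I would write
\[
\abs{\nu - f(\bone_D)} \le \sum_{r=1}^s \abs{\nu_r - \inner{\bT^{(r)}}{\bone_D^{\otimes r}}} + \sum_{r>s} \abs{\inner{\bT^{(r)}}{\bone_D^{\otimes r}}}.
\]
For the tail, Cauchy--Schwarz and $\alpha$-stability with $B = 4\sqrt d$ give $\abs{\inner{\bT^{(r)}}{\bone_D^{\otimes r}}} \le \fnorm{\bT^{(r)}} d^{r/2} \le \alpha(4\sqrt d)\,4^{-r}$. The tail is therefore at most $\alpha(4\sqrt d)\,4^{-s}/3$, which is $\le \varepsilon\,\alpha(4\sqrt d)/2$ once $s = \lceil \log_4(1/\varepsilon)\rceil + O(1)$.

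For the sketch terms, the standard deviation of $n[r] q_{i,r}\inner{\bpsi_i}{\bone_D}^r$ is at most $2^r d^{r/2}\fnorm{\bT^{(r)}} \le \alpha(4\sqrt d)\,2^{-r}$, again using stability at $B = 4\sqrt d$. This is exactly why the argument is $4\sqrt d$: one factor of $2^r$ is absorbed by the variance bound and the other leaves a geometric series. Lemma~\ref{lemma:mom} then gives, for each $r$, error at most $\alpha(4\sqrt d)\,2^{-r}\sqrt{4m/k}$ with failure probability $2e^{-m/8}$. Summing over $r \ge 1$ gives at most $\alpha(4\sqrt d)\sqrt{4m/k}$. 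A union bound over the $s$ orders gives failure probability $2s e^{-m/8}$.

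I would therefore choose $m = \lceil 8\ln(2s/\delta)\rceil$ and $k = \lceil 64 m/\varepsilon^2\rceil$, rounded so that $m \mid k$. The total error is then $\le \varepsilon\,\alpha(4\sqrt d)$ with probability $1-\delta$. Since $s = O(\log(1/\varepsilon))$, $k = \tilde O(\log(1/\delta)/\varepsilon^2)$, where the $\log s$ from the union bound is absorbed into $\tilde O$.

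Finally, the complexity. Precomputation performs $k$ evaluations over $R$, costing $O(k\,\size(\calA)\,s\log s)$ (Section~\ref{subsection:forward-ad}). Storage is $k(n + p(s+1))$ field elements. Prediction costs $O(k\,\size(\phi)\,s\log s)$ for the $q_i$, plus $O(kd)$ for the inner products $\inner{\bpsi_i}{\bone_D}$, plus $O(ks)$ for powers and median-of-means. All of these are $\tilde O(\cdot\,\log(1/\delta)/\varepsilon^2)$.

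I expect the main obstacle to be the bookkeeping in the first step: confirming that the coefficient of $z^r$ in $f(z\bpsi)$ matches the lemma's estimator with the correct conjugations. This requires $\bT^{(r)}$ to be taken symmetric, which holds by definition of the Taylor coefficients. Choosing $B = 4\sqrt d$ so that the variance factor $4^r$ still leaves a summable series also needs care; the rest is routine.
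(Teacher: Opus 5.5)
Your proposal is correct and follows the paper's proof essentially step for step. It identifies $q_{i,r} = \inner{\bT^{(r)}}{\bpsi_i^{\otimes r}}$ and bounds both the truncation tail and the per-order median-of-means errors via stability at $B = 4\sqrt{d}$; your constants are only more conservative than the paper's $s = \log_4(2/\varepsilon)$, $m = 8\ln(2s/\delta)$, $k = 16m/\varepsilon^2$, and your rounding so that $m \mid k$ is a detail the paper skips. The conjugation check you flag goes through directly with $\bT = \bT^{(r)}$ and $\bx = \bone_D$, because the paper's convention makes $\inner{\bT^{(r)}}{\bx^{\otimes r}}$ linear in $\bx$ and $\inner{\bpsi_i}{\bone_D}^r = \inner{\bpsi_i^{\otimes r}}{\bone_D^{\otimes r}}$, so no conjugated tensor is needed.
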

\begin{proof}
    We start by bounding the error, then we analyze the resource requirements.

    \paragraph{The error.}
    Let $f \dfn \phi \circ \calA$ and write the Taylor series of $f$ as
    \[
        f(\bx) = \sum_{r \ge 0} \inner{\bT^{(r)}}{\bx^{\otimes r}}.
    \]
    First observe that $q_i = f(z \bpsi_i)$, so $q_{i,r} = \inner{\bT^{(r)}}{\bpsi_i^{\otimes r}}$ for all $r \le s$.    
    We will bound the error in two components: the error from truncating to order $s$ and the error from the variance of the sketch.
    
    The error from truncating $f$ to order $s$ is at most
    \begin{align*}
        \abs{\sum_{r=s+1}^{\infty} \inner{\bT^{(r)}}{\bone_D^{\otimes r}}}
        &\le \sum_{r=s+1}^{\infty} \fnorm{\bT^{(r)}} \, \norm{\bone_D}^r \\
        &= \sum_{r=s+1}^{\infty} \frac{1}{4^r} \, \fnorm{\bT^{(r)}} \, \left(4 \sqrt{d}\right)^r \\
        &\le \sum_{r=s+1}^{\infty} \frac{1}{4^r} \, \alpha\!\left(4 \sqrt{d}\right) \\
        &\le \frac{1}{4^s} \, \alpha\!\left(4 \sqrt{d}\right).
    \end{align*}
    We now turn to the error from the variance of the sketch.
    Note that we always have $\nu_0 = f(\bzero)$ because $q_{i,0} = f(\bzero)$ deterministically for every $i \in [k]$.
    For each $r \in [s]$, Lemma~\ref{lemma:symmetric-subspace-estimator} implies that
    \begin{align*}
        \E[n[r] \, q_{i,r} \inner{\bpsi_i}{\bone_D}^r] &= \E\left[n[r] \, \inner{\bT^{(r)}}{\bpsi_i^{\otimes r}} \cdot \inner{\bpsi_i}{\bone_D}^r\right] \\
        &= \inner{\bT^{(r)}}{\bone_D^{\otimes r}}
    \end{align*}
    and
    \[
        \Var[n[r] \, q_{i,r} \inner{\bpsi_i}{\bone_D}^r] \le (4 d)^r \, \fnorm{\bT^{(r)}}^2.
    \]
    By Lemma~\ref{lemma:mom}, it follows that each median-of-means estimate $\nu_r$ satisfies
    \[
        \abs{\nu_r - \inner{\bT^{(r)}}{\bone_D^{\otimes r}}} \le (4 d)^{r/2} \sqrt{\frac{4 m}{k}} \, \fnorm{\bT^{(r)}}
    \]
    with probability $1 - 2 \, e^{-m/8}$.
    By a union bound, with probability $1 - 2 s \, e^{-m/8}$ the final error up to order $s$ is
    \begin{align*}
        \abs{\sum_{r=1}^s \nu_r - \sum_{r=1}^s \inner{\bT^{(r)}}{\bone_D^{\otimes r}}}
        &\le \sum_{r=1}^s (4 d)^{r/2} \sqrt{\frac{4 m}{k}} \, \fnorm{\bT^{(r)}} \\
        &= \sum_{r=1}^s \frac{1}{2^r} \, \sqrt{\frac{4 m}{k}} \, \fnorm{\bT^{(r)}} \, \left(4 \sqrt{d}\right)^r \\
        &\le \sum_{r=1}^s \frac{1}{2^r} \, \sqrt{\frac{4 m}{k}} \, \alpha\!\left(4 \sqrt{d}\right) \\
        &\le \sqrt{\frac{4 m}{k}} \, \alpha\!\left(4 \sqrt{d}\right).
    \end{align*}
    Adding this to the truncation error, it follows that
    \[
        \abs{\nu - f(\bone_D)} \le \left(\frac{1}{4^s} + \sqrt{\frac{4 m}{k}}\right) \cdot \alpha\!\left(4 \sqrt{d}\right)
    \]
    with probability $1 - 2 s \, e^{-m/8}$.
    We get the desired bounds in terms of $\varepsilon, \delta$ by setting
    \begin{itemize}
        \item $s = \log_4(2/\varepsilon) = \tilde{O}(1)$,
        \item $m = 8 \ln(2 s / \delta) = \tilde{O}(\log(1/\delta))$, and
        \item $k = 16 m / \varepsilon^2 = \tilde{O}(\log(1/\delta)/\varepsilon^2)$.
    \end{itemize}

    \paragraph{The complexity of precomputation and storage requirements.}
    Algorithm~\ref{algorithm:precomputation} just computes $k$ evaluations of $\calA$ over $R$.
    So following the discussion in Section~\ref{subsection:forward-ad}, its complexity is $O(\size(\calA) \, k s \log s) = \tilde{O}(\size(\calA) \, \log(1/\delta)/\varepsilon^2)$.

    The storage requirements are $O(k n)$ for $\bPsi$ and $O(k s p)$ for $\bP$, for a total of $O(k n + k s p) = \tilde{O}((n + p) \, \log(1/\delta)/\varepsilon^2)$.
    This can be reduced to $O(k s p) = \tilde{O}(p \, \log(1/\delta)/\varepsilon^2)$ if we select the $j$th coordinate of $\bpsi_i$ as a pseudorandom function of $(i, j)$.

    \paragraph{The complexity of prediction.}
    In Algorithm~\ref{algorithm:prediction}, we first compute $k$ evaluations of $\phi$ over $R$.
    This requires $O(\size(\phi) \, k s \log s)$ operations.

    Then we compute $\inner{\bpsi_i}{\bone_D} = \sum_{j \in D} \overline{\psi}_{i,j}$ for all $i \in [k]$ in time $O(d k)$.
    Each $\nu_r$ then requires computing a median-of-means in time $O(k)$.
    Finally, we compute $\nu \gets \nu_0 + \dots + \nu_s$ in time $O(s)$.

    In total, Algorithm~\ref{algorithm:prediction} requires $O(\size(\phi) \, k s \log s + d k + s k + s) = \tilde{O}( (\size(\phi) + d) \, \log(1/\delta) / \varepsilon^2)$ operations.
\end{proof}

It is important for us to use \emph{complex} directions $\bpsi$ because there is no replacement for Theorem~\ref{theorem:symmetric-subspace} (and by extension Lemma~\ref{lemma:symmetric-subspace-estimator}) that uses a distribution over real vectors.
See the discussion at the end of Section~\ref{subsection:symmetric-subspace} for an explanation.
In order to avoid complex directions, it appears to be necessary to use multi-directional derivatives.
Whereas our algorithms run in time almost linear in $s$, using multi-directional derivatives would require running in time exponential in $s$.

% !TEX root = main.tex
\subsection{Experiments: the stability of microgpt} \label{subsection:microgpt}

Theorem~\ref{theorem:deletion} fully specifies the performance of our data deletion method up to the stability function $\alpha$.
In this section we investigate this function for microgpt \cite{microgpt}, finding that it appears to behave favorably for our method.
The experiments in this section are extremely minimal, and intended as a proof of concept.
We hope that future work can better understand the scaling of $\alpha$ with the size of the model and amount of training data.

All of our experiments are done with the same set-up as microgpt \cite{microgpt}, except that we increase $\texttt{eps\_adam}$ from $10^{-8}$ to $10^{-3}$ and move it inside the square root (a similar modification was used in \cite{replay,magic}), and we replace ReLU gates with GELU gates.
The code is available at \href{https://github.com/SamSpo1/microgpt-sketch}{https://github.com/SamSpo1/microgpt-sketch}.

Let $f = \phi \circ \calA$ be the composition of the learning algorithm and measurement function and write
\[
    f(\bx) = \sum_{r \ge 0} \inner{\bT^{(r)}}{\bx^{\otimes r}}.
\]
Theorem~\ref{theorem:deletion} requires a bound on the stability $\alpha$ of $f$, which (recall Section~\ref{subsection:stable-circuits}) controls the Frobenius norm of $\bT^{(r)}$:
\[
    \fnorm{\bT^{(r)}} \le \frac{\alpha(B)}{B^r}
\]
for all $B > 0$.
Given that $\bT^{(r)}$ has $n^r$ coordinates, it can be prohibitively expensive to compute $\fnorm{\bT^{(r)}}$ directly.
Instead we use Lemma~\ref{lemma:symmetric-subspace-estimator}, which tells us that if $\bpsi \in \C^n$ is a random unit vector then\footnote{Technically, Lemma~\ref{lemma:symmetric-subspace-estimator} only applies for estimating $\inner{\bT}{\bx^{\otimes r}}$. However the exact same argument applies for estimating $\inner{\bT}{\bT'}$.}
\[
    n[r] \, \E_{\bpsi}\left[\abs{\inner{\bT^{(r)}}{\bpsi^{\otimes r}}}^2\right] = \fnorm{\bT^{(r)}}^2
\]
and the variance is at most $4^r \fnorm{\bT^{(r)}}^4$.
Motivated by this, we estimate $\fnorm{\bT^{(r)}}$ by computing
\[
    \sqrt{n[r]} \, \abs{\inner{\bT^{(r)}}{\bpsi^{\otimes r}}}
\]
for several random unit vectors $\bpsi \in \C^n$ and $r \in [100]$.
The results are presented in Figure~\ref{figure-1} on a logarithmic scale.

\begin{figure}[ht]
    \centering
    \input{fig1.tex}
    \caption{Estimates of $\log \fnorm{\bT^{(r)}}$, i.e., the log Frobenius norms of the Taylor coefficients of $f = \phi \circ \calA$.
    The estimates are obtained as $\log\left(\sqrt{n[r]} \, \abs{\inner{\bT^{(r)}}{\bpsi^{\otimes r}}}\right)$ for random complex directions $\bpsi$.
    The learning algorithm $\calA$ trains microgpt with Adam for 1000 steps on a dataset of names with a batch size of 1, as in \cite{microgpt}.
    Here $\phi$ is the loss of the given model on a random training example (chosen independently for each direction $\bpsi$).}
    \label{figure-1}
\end{figure}

Let us briefly explain how the decay of $\fnorm{\bT^{(r)}}$ translates to $\alpha$.
If
\[
    \fnorm{\bT^{(r)}} = 2^{-\omega(r)},
\]
then for any $B > 0$, we will have a finite bound $\alpha(B) < \infty$.
In particular, for any constant $d$, Theorem~\ref{theorem:deletion} will guarantee an error of $O(\varepsilon)$.
Our experiments with microgpt in Figure~\ref{figure-1} suggest that in fact we have something more like
\[
    \fnorm{\bT^{(r)}} = 2^{-\Omega(r^{3/2})}.
\]
These results show that our data deletion scheme should achieve vanishing error for at least small numbers of deletions.
This is borne out by Figures~\ref{figure-2} and~\ref{figure-3}, explained below.
We did not attempt to figure out how $\fnorm{\bT^{(r)}}$ scales with the number of training examples $n$ and the number of model parameters $p$, which could be important for determining the behavior for larger numbers of deletions.

We conclude with two figures describing the function $z \mapsto f(z \, \bone_D)$ for $z \in [0,1]$.
In Figure~\ref{figure-2}, we examine how this function compares to the \emph{exact} Taylor approximations around $z = 0$.
In Figure~\ref{figure-3}, we examine how it compares to the \emph{sketched} Taylor approximations obtained from our data deletion scheme.
Figure~\ref{figure-3} is therefore a demonstration that our method works in practice, at least for microgpt.
Together, Figures~\ref{figure-2} and~\ref{figure-3} show that higher-order approximations are both necessary and sufficient for achieving accurate predictions in our set-up.

\begin{remark}
    As explained in Section~\ref{subsection:related-work}, our method is already novel in the linear case, but higher-order approximations allow us to get the strong guarantees of Theorem~\ref{theorem:deletion}.
    Even if one modifies the set-up to ensure that the linear approximation is accurate, our method is still the only one to achieve a guaranteed bound on the prediction error without needing foreknowledge of the measurement function during precomputation.
\end{remark}

\begin{figure}[ht]
    \centering
    % !TEX root = main.tex
\begin{tikzpicture}
    \pgfmathsetmacro{\cA}{3.0913238794422018}
    \pgfmathsetmacro{\cB}{0.25965844733840343}
    \pgfmathsetmacro{\cC}{0.11163512679844428}
    \pgfmathsetmacro{\cD}{0.0679755958046495}
    \pgfmathsetmacro{\cE}{0.032059205918654306}
    \pgfmathsetmacro{\cF}{0.013569830861609827}

    \begin{axis}[
        width=12cm,
        height=8cm,
        xmin=0, xmax=1,
        ymin=3.0667406363510072, ymax=3.607346720434554,
        xlabel={downweight $z$},
        ylabel={$f(z \bone_D)$},
        grid=both,
        major grid style={draw=gray!30},
        minor grid style={draw=gray!15},
        tick align=outside,
        legend pos=north west,
        legend cell align=left,
        legend style={draw=none, fill=white, font=\small},
        line cap=round,
        line join=round,
    ]

    \addplot[
        black,
        thick,
        mark=*,
        only marks,
        mark size=1.6pt,
    ] coordinates {
        (0, 3.0913136401729866)
        (0.10000000000000001, 3.1184669086184376)
        (0.20000000000000001, 3.1483101734636838)
        (0.29999999999999999, 3.1813910249483568)
        (0.40000000000000002, 3.2183783688681764)
        (0.5, 3.2600910696458101)
        (0.59999999999999998, 3.307521541679395)
        (0.69999999999999996, 3.3618247240796313)
        (0.80000000000000004, 3.4242100405076288)
        (0.90000000000000002, 3.4956964339267986)
        (1, 3.5767521812165035)
    };
    \addlegendentry{empirical}

    \addplot[
        red,
        thick,
        dashed,
        domain=0:1,
        samples=300,
    ] {\cA + \cB*x};
    \addlegendentry{degree 1}

    \addplot[
        green!60!black,
        thick,
        dotted,
        domain=0:1,
        samples=300,
    ] {\cA + \cB*x + \cC*x^2};
    \addlegendentry{degree 2}

    \addplot[
        blue,
        thick,
        dashdotted,
        domain=0:1,
        samples=300,
    ] {\cA + \cB*x + \cC*x^2 + \cD*x^3};
    \addlegendentry{degree 3}

    \addplot[
        orange,
        thick,
        loosely dashed,
        domain=0:1,
        samples=300,
    ] {\cA + \cB*x + \cC*x^2 + \cD*x^3 + \cE*x^4};
    \addlegendentry{degree 4}

    \addplot[
        purple,
        thick,
        solid,
        domain=0:1,
        samples=300,
    ] {\cA + \cB*x + \cC*x^2 + \cD*x^3 + \cE*x^4 + \cF*x^5};
    \addlegendentry{degree 5}

    \end{axis}
\end{tikzpicture}
    \caption{Taylor approximations to $f(z \, \bone_D)$, where $f = \phi \circ \calA$.
    As in Figure~\ref{figure-1}, the learning algorithm $\calA$ trains microgpt for 1000 steps on a dataset of names \cite{microgpt}.
    The deleted set $D$ is all names containing the letter ``x'' (there are 18) and the measurement function $\phi$ is the loss on the name ``max.''
    The black dots are the values $f(z \, \bone_D)$ for $z \in \{0, 0.1, \dots, 1\}$, and the colored lines are Taylor approximations to the function $z \mapsto f(z \, \bone_D)$ around $z = 0$.}
    \label{figure-2}
\end{figure}

In Figure~\ref{figure-3}, we evaluate our actual data deletion scheme with $k = 500$ samples in the same set-up as Figure~\ref{figure-2}.
Interestingly, although the number of deletions is large enough that the variance grows with the degree, higher-order approximations nonetheless outperform the linear approximation.

\begin{figure}[ht]
    \centering
    \input{fig3.tex}
    \caption{Predicted loss obtained using Algorithms~\ref{algorithm:precomputation} and~\ref{algorithm:prediction} with the same $\calA$, $\phi$, and $D$ as in Figure~\ref{figure-2}.
    We use $s \in \{1,2,3\}$ and $k = 500$ in $\precompute_{s,k}(\calA)$ and $m = 1$ in $\predict_{\bPsi,\bP,m}(D, \phi)$.
    The lines are the predictions obtained by the data deletion scheme, with 1 standard deviation shaded.
    In this case the approximation to the degree 3 expansion happens to be more accurate than the degree 3 expansion itself (cf. Figure~\ref{figure-2}), but this is pure chance.}
    \label{figure-3}
\end{figure}

\bibliographystyle{alpha}
\bibliography{0-references}

\appendix

% !TEX root = main.tex
\section{Supplement to related work: post hoc unlearning} \label{section:post-hoc-unlearning}

A post hoc unlearning method does not make any reference to the algorithm used to train the model.
While post hoc unlearning methods may or may not work in practice, any security proof for a post hoc machine unlearning method must depend on the particular learning algorithm that was used.
We explain the situation below.

Let $\approx$ be any kind of indistinguishability (such as statistical distance, computational indistinguishability, or $(\varepsilon, \delta)$-differential privacy).
We say that a post hoc unlearning algorithm $\calU$ is $\approx$-certified unlearning for a learning algorithm $\calA$ if there exists an algorithm $\bar{\calA}$ such that, for any forget dataset $D$ and initial dataset $S$,
\[
    \calU(\calA(S), S, D) \approx \bar{\calA}(S \setminus D).
\]
This definition generalizes that of \cite{certified-unlearning}.
The following proposition shows that any proof that a scheme satisfies this definition of unlearning must depend on the learning algorithm.

\begin{proposition} \label{proposition:certified-unlearning}
    Suppose that $\calU$ is $\approx$-unlearning simultaneously for every learning algorithm $\calA$.
    Then for \emph{any} training algorithm $\calA$, \emph{any} non-empty forget set $D$, \emph{any} initial dataset $S$, and any $x_* \not\in S$,
    \[
        \calU(\calA(S), S, D) \approx_2 \calU(\bot, S \cup \{x_*\}, D \cup \{x_*\}),
    \]
    where $\approx_2$ denotes the indistinguishability resulting from two applications of $\approx$ and $\bot$ denotes a dummy model (e.g., one with all parameters set to 0).
\end{proposition}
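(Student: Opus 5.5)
We prove the proposition by a hybrid argument through a single auxiliary learning algorithm. The idea is that, because $\calU$ is certified for \emph{every} learning algorithm, we may pick a learning algorithm that behaves like $\calA$ on some datasets and like the dummy $\bot$ on others. Two applications of the certification guarantee then link the two sides of the claimed indistinguishability.

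\textbf{Step 1: define the hybrid algorithm.} Given $\calA$, $S$, $D$ and $x_*\notin S$, let $\calA'$ be the learning algorithm that outputs $\bot$ on any dataset containing $x_*$ and runs $\calA$ on every other dataset. By hypothesis, $\calU$ is $\approx$-certified for $\calA'$. So there is an algorithm $\bar{\calA}'$ such that, for every $D'$ and $S'$,
\[
\calU(\calA'(S'), S', D') \approx \bar{\calA}'(S'\setminus D').
\]

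\textbf{Step 2: first application.} Take $S' = S$ and $D' = D$. Since $x_*\notin S$, we have $\calA'(S)=\calA(S)$, and therefore
\[
\calU(\calA(S), S, D) \approx \bar{\calA}'(S\setminus D).
\]

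\textbf{Step 3: second application.} Take $S' = S\cup\{x_*\}$ and $D' = D\cup\{x_*\}$. This dataset contains $x_*$, so $\calA'(S') = \bot$. Moreover $S'\setminus D' = S\setminus D$, because $x_*$ is removed and $x_*\notin S$. Hence
\[
\calU(\bot, S\cup\{x_*\}, D\cup\{x_*\}) \approx \bar{\calA}'(S\setminus D).
\]

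\textbf{Step 4: combine.} Chaining Steps 2 and 3 through the common term $\bar{\calA}'(S\setminus D)$ gives $\approx_2$. By definition, $\approx_2$ is the relation obtained by two applications of $\approx$, allowing the symmetric use of one of them; this gives the claim.

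The main subtlety is making sure $\calA'$ is a legitimate learning algorithm, which is fine since learning algorithms here are arbitrary. One must also check that the certification is quantified over all $S', D'$ for a fixed $\bar{\calA}'$, which the definition provides. The non-emptiness of $D$ plays no essential role in this argument.
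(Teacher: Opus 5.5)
Your proposal is correct and follows the paper's proof step for step. The paper defines the same auxiliary algorithm $\calB$ (output $\bot$ if $x_* \in S$, otherwise $\calA(S)$), applies the certification guarantee once at $(S, D)$ and once at $(S \cup \{x_*\}, D \cup \{x_*\})$, and chains the two through the common term $\bar{\calB}(S \setminus D)$, without ever using that $D$ is non-empty.
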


In particular, Proposition~\ref{proposition:certified-unlearning} means that no post hoc unlearning scheme with a security proof that does not depend on the training algorithm can perform better than re-training: It does not even materially use the initial trained model $\calA(S)$.

\begin{proof}
    Let $x_*$ be any data point and $\calA$ be any learning algorithm.
    Define
    \[
        \calB(S) = \begin{cases}
            \bot & \text{if } x_* \in S, \\
            \calA(S) & \text{otherwise.}
        \end{cases}
    \]
    Since $\calU$ is $\approx$-unlearning for every algorithm, there exists $\bar{\calB}$ such that, for every forget set $D$ and every initial dataset $S$,
    \begin{equation} \label{equation:no-agnostic-deletion}
        \calU(\calB(S), S, D) \approx \bar{\calB}(S \setminus D).
    \end{equation}
    In particular, for any forget set $D$ and any dataset $S$ not containing $x_*$,
    \begin{align*}
        \calU(\bot, S \cup \{x_*\}, D \cup \{x_*\}) &= \calU(\calB(S \cup \{x_*\}), S \cup \{x_*\}, D \cup \{x_*\}) \\
        &\approx \bar{\calB}(S \setminus D) \\
        \intertext{and}
        \calU(\calA(S), S, D) &= \calU(\calB(S), S, D) \\
        &\approx \bar{\calB}(S \setminus D).
    \end{align*}
    The $=$'s follow from the definition of $\calB$ and the $\approx$'s follow from Equation~\ref{equation:no-agnostic-deletion}.
    Therefore
    \[
        \calU(\calA(S), S, D) \approx_2 \calU(\bot, S \cup \{x_*\}, D \cup \{x_*\}),
    \]
    completing the proof.
\end{proof}

The reason that Proposition~\ref{proposition:certified-unlearning} does not contradict prior work with security proofs that do not depend on the learning algorithm is that those works do not come with provable \emph{correctness} guarantees.
Therefore, in the regime where the security proof applies, they can be viewed as essentially re-training in disguise.
This highlights the danger of using certified unlearning as a privacy guarantee without also analyzing correctness.

\end{document}